\documentclass{article} 
\usepackage[left=1.5in, right=1.5in, bottom=1.3in, headsep=.2in]{geometry}
\usepackage{times}
\usepackage{hyperref}
\usepackage{url}
\usepackage{bm}
\usepackage{amsmath,amsthm,amssymb}
\usepackage{graphicx}
\usepackage{enumerate}
\usepackage{caption}
\usepackage{subcaption}

\newtheorem{theorem}{Theorem}
\newtheorem{lemma}[theorem]{Lemma}

\newcommand{\hk}{\hat{\kappa}}

\newcommand{\hpp}{P_{\hat{V}_k}^{\perp}}

\newcommand{\pp}{P_{V_k}^{\perp}}
\newcommand{\bx}{\bm{x}}
\newcommand{\by}{\bm{y}}
\newcommand{\yqk}{\hat{\bm{y}}_{q,k}}
\newcommand{\oyqk}{\bm{y}_{q,k}^*}
\newcommand{\oyq}{\bm{y}_{q}^*}
\newcommand{\bq}{\bm{q}}

\title{Revisiting Kernelized Locality-Sensitive Hashing \\for Improved Large-Scale Image Retrieval}

\author{
Ke Jiang, Qichao Que, Brian Kulis \\
Department of Computer Science and Engineering\\
The Ohio State University\\
\texttt{\{jiangk,que,kulis\}@cse.ohio-state.edu} \\
}

%


\begin{document}

\maketitle


\begin{abstract}
We present a simple but powerful reinterpretation of kernelized locality-sensitive hashing (KLSH), a general and popular method developed in the vision community for performing approximate nearest-neighbor searches in an arbitrary reproducing kernel Hilbert space (RKHS).  Our new perspective is based on viewing the steps of the KLSH algorithm in an appropriately projected space, and has several key theoretical and practical benefits.  First, it eliminates the problematic conceptual difficulties that are present in the existing motivation of KLSH.  Second, it yields the first formal retrieval performance bounds for KLSH.  Third, our analysis reveals two techniques for boosting the empirical performance of KLSH.  We evaluate these extensions on several large-scale benchmark image retrieval data sets, and show that our analysis leads to improved recall performance of at least 12\%, and sometimes much higher, over the standard KLSH method.
\end{abstract}

\section{Introduction}

Similarity search (or nearest neighbor (NN) search) for large databases plays a critical role in a number of important vision applications including content-based image and video retrieval. Usually, the data are represented in a high-dimensional feature space, and the number of objects in the database can scale to the billions in modern applications. As such, fast indexing and search is a vital component to many large-scale retrieval systems. 

A key theoretical and practical breakthrough for the similarity search problem was the development of locality-sensitive hashing (LSH) \cite{lsh,plsh,simhash}, which relies on Gaussian random projections for Euclidean distance and can provably retrieve \textit{approximate} nearest neighbors in time that grows sublinearly in the number of database items.  In the vision community, LSH has long been employed as one of the core methods for large-scale retrieval~\cite{fast,shift_lsh,klsh,weak-supervised,scalable-kernel,multi_klsh,supervised,kpca_lsh}.  Unfortunately, in some cases, image comparison criteria are based on functions other than the simple Euclidean distance between corresponding image feature vectors, which makes LSH inapplicable in several settings. Some foundational work has been done to extend LSH to kernels satisfying particular conditions, such as hashing for shift-invariant kernels in~\cite{shift_lsh} based on random Fourier features~\cite{random_sink}. More generally, Kulis and Grauman~\cite{klsh} proposed a technique called kernelized LSH (KLSH) for approximate nearest neighbor searches with arbitrary kernels, thus extending LSH to situations where only kernel function evaluations are possible. The main idea behind KLSH is to approximate the necessary Gaussian random projections in the kernel space using an appropriate random combination of items from the database, based on an application of the central limit theorem.

Since its publication, KLSH has been used extensively in the computer vision community, and related hashing methods have been built from the KLSH foundations \cite{weak-supervised, scalable-kernel, multi_klsh, supervised}; however, KLSH still suffers from some important drawbacks. First, while Kulis and Grauman show that the central limit theorem ensures that the approximate random projections constructed become true Gaussian random projections as the number of sampled database items gets larger, no bounds are explicitly given to clarify the tradeoff between accuracy and runtime. Even worse, the approach that KLSH uses--that of applying a random projection with a $\mathcal{N}(0,I)$ vector--is conceptually inappropriate in an infinite-dimensional kernel space since, as we will discuss later, no such canonical Gaussian distribution even exists.

In this paper, we present a simple yet powerful reinterpretation of KLSH, which we describe in Section \ref{sec:interpretation}. This new perspective gracefully resolves the ``infinite Gaussian" issue and provides us with the first explicit performance bounds to clearly demonstrate tradeoffs between runtime and retrieval accuracy.  Crucially, this tradeoff also reveals two potential techniques which boost the empirical performance of vanilla KLSH. In particular, we show how to modify KLSH to obtain improvements in recall performance of at least 12\%, and sometimes much higher, on all the benchmarks examined in Section~\ref{sec:experiment}.

\subsection{Related work and our contributions}
There is limited existing theoretical analysis \cite{multi_klsh} of KLSH based on Nystr\"om approximation bounds \cite{generalization_nystrom}. However, this analysis only examines the average error between the original kernel function values and the approximations made by KLSH, and does not provide any bounds on retrieval performance. Moreover, as we will discuss in Section~\ref{sec:interpretation}, there is a subtle difference between KLSH and the Nystr\"om method, rendering the aforementioned analysis problematic.  Further, we will demonstrate in Section~\ref{sec:experiment} that KLSH bears advantages over the Nystr\"om method when the number of database items selected to approximate kernel functions is relatively small.

There have been conflicting views about the comparison of KLSH and LSH after applying kernel PCA~\cite{kpca} to the data. For example, some work \cite{klsh} has concluded that KLSH has a clear performance edge over KPCA+LSH, while these results are contradicted by the empirical analysis in \cite{kpca_lsh,aska} which demonstrated that LSH after a KPCA projection step shows a significant improvement over KLSH. We will see in Section~\ref{sec:interpretation} that these two seemingly disparate methods are \textit{equivalent} (up to how the random vectors are drawn in the two approaches), and the performance gap observed in practice is largely \textit{only} due to the choice of parameters. Although \cite{kpca_lsh} gives some error analysis for the LSH after a PCA projection step using the Cauchy-Schwarz inequality, no explicit performance bounds are proved. Thus, it fails to show the interesting tradeoffs and retrieval bounds that we derive in Section~\ref{sec:bounds}.

Recently, there has been work on kernel approximation-based visual search methods. Asymmetric sparse kernel approximations \cite{aska} aim to approximate the nearest neighbor search with an asymmetric similarity score computed from $m$ randomly selected landmarks. It has shown excellent empirical performance with $m=8192$. Kernelized random subspace hashing (KRSH) \cite{krsh} attempts to randomly generate the orthogonal bases for an $m$-dimensional subspace in kernel space. Then classical hashing schemes are employed on the projection to this subspace. These approaches may be viewed as variants of the Nystr\"om method; we note that the authors of~\cite{krsh} were able to provide a preservation bound on inter-vector angles and showed better angle preservation than KLSH.

Our main contribution can be summarized as threefold. First, we provide a new interpretation of KLSH, which not only provides a firmer theoretical footing but also resolves issues revolving around comparisons between KLSH and LSH after projection via kernel PCA. Second, we are able to derive the first formal retrieval bounds for KLSH, demonstrating a tradeoff similar to the classic bias-variance tradeoff in statistics. Lastly and most importantly, our analysis reveals two potential techniques for boosting the performance of standard KLSH.  We successfully validate these techniques on large-scale benchmark image retrieval datasets, showing at least a 12\% improvement in recall performance across all benchmarks.

\section{Background: LSH for Similarities and KLSH}
\label{sec:klsh}
Assume that the database is a set of $n$ samples $\{\bm{x}_1,\ldots,\bm{x}_n\}\in\mathbb{R}^d$. Given a query $\bm{q}\in\mathbb{R}^d$ and a user-defined kernel function $\kappa(\cdot,\cdot) = \langle \Phi(\cdot),\Phi(\cdot)\rangle$ with the feature map $\Phi: \mathbb{R}^d\to\mathcal{H}$, where $\mathcal{H}$ is the implicit reproducing kernel Hilbert space (RKHS), we are interested in finding the most similar item in the database to the query  $\bm{q}$ with respect to $\kappa(\cdot, \cdot)$, i.e., $\mbox{argmax}_i \kappa(\bm{q},\bm{x}_i)$. 

LSH is a general technique for constructing and applying hash functions to the data such that two similar objects are more likely to be hashed together~\cite{lsh,plsh}.  When the hash functions are binary, and $b$ hash functions are employed, this results in a projection of the data into a $b$-dimensional binary (Hamming) space.  Note that there are several possible LSH schemes, including non-binary hashes, but we will focus mainly on binary hashing in this paper.  One advantage to binary hashing is that nearest neighbor queries in the Hamming space can be implemented very quickly; tree-based data structures can be used to find approximate nearest neighbors in the Hamming space in time sub-linear in the number of data points~\cite{simhash}, and even an exact nearest neighbor computation can be performed extremely quickly in the Hamming space.  

In order to meet the locality-sensitive requirement for similarity functions, each hash function $h$ should satisfy \cite{simhash}:
\[
	\text{Pr}[h(\bm{x}_i)=h(\bm{x}_j)] = \kappa(\bm{x}_i,\bm{x}_j).
\]
Here, we only consider normalized kernel functions $\kappa(\cdot,\cdot)\in[0,1]$; for un-normalized kernels, our results can be applied after normalization via $\kappa(\bm{x},\bm{y})/\sqrt{\kappa(\bm{x},\bm{x})\cdot \kappa(\bm{y},\bm{y})}$. Given valid hash families, the query time for retrieving ($1+\epsilon$)-nearest neighbors is bounded by $O(n^{1/(1+\epsilon)})$ for the Hamming distance \cite{lsh, simhash}. For the linear kernel $\kappa(\bm{x}_i,\bm{x}_j)=\bm{x}_i^T\bm{x}_j$ (i.e. $\Phi(\bm{x}) = \bm{x}$) for normalized histograms, Charikar~\cite{simhash} showed that a hash family can be constructed by rounding the output of the product with a random hyperplane:
\begin{equation}
	h_{\bm{r}}(x) = \left\{ \begin{array}{ll}
	1, & \text{if }\bm{r}^T\bm{x}\ge 0 \\
	0, & \text{otherwise}
	\end{array} \right. , 
	\label{eqn:lsh}
\end{equation} 
where $\bm{r}\in\mathbb{R}^d$ is a random vector sampled from the standard multivariate Gaussian distribution (i.e., from ${\mathcal N}(0,I))$. This can be directly extended to kernels having known explicit representations with dimension $d_\Phi<\infty$. However, this is not the case for many commonly-used kernels in vision applications.

In order to deal with arbitrary kernels, KLSH attempts to mimic this technique by drawing approximate Gaussian random vectors in the RKHS via the central-limit theorem (CLT). The key advantage to this approach is that the resulting hash function computation can be accomplished solely using kernel function evaluations.


Considering $\{\Phi(\bm{x}_1),\ldots,\Phi(\bm{x}_t)\}$ as $t$ realizations of random variable $\Phi(X)$ with known mean $\mu$ and covariance operator $C$, the classical CLT~\cite{Feller} ensures that the random vector $C^{-1/2}\tilde{z}_t = C^{-1/2}[\sqrt{t}(\frac{1}{t}\sum_{i=1}^{t}\Phi(\bm{x}_i)-\mu)]$ converges to a standard Gaussian random vector as $t\rightarrow\infty$.
Therefore, the hash family \eqref{eqn:lsh} can be approximated by:
\begin{equation}
	h(\Phi(x)) = \left\{ \begin{array}{ll}
	1, & \text{if }\Phi(x)^T C^{-1/2}\tilde{\bm{z}_t}\ge 0 \\
	0, & \text{otherwise}
	\end{array} \right. . 
	\label{eqn:klsh}
\end{equation}

In practice, the mean $\mu$ and the covariance matrix $C$ of the data are not known and must be estimated through a random set $S=\{\hat{\bm{x}}_{1},\ldots,\hat{\bm{x}}_{m}\}$ from the database. Choosing the $t$ random samples used in the CLT approximation from $S$ ($t<m$), \cite{klsh} showed that \eqref{eqn:klsh} has the convenient form
\begin{equation}
	h(\Phi(\bm{x}))=\text{sign}(\sum_{i=1}^{m}\bm{w}(i)\kappa(\bm{x},\hat{\bm{x}}_{i})),
	\label{eqn:final_klsh}
\end{equation}
where $\bm{w}=\bar{K}^{-1/2}\bm{e}_{S_t}$ with $\bar{K}$ the $m\times m$ centered kernel matrix formed by $\{\hat{\bm{x}}_{1},\ldots,\hat{\bm{x}}_{m}\}$ and $\bm{e}_{S_t}$ an $m\times1$ vector with ones at the entries corresponding to the $t$ samples. Note that some constant scaling terms have been dropped without changing the hash function evaluation.

The validity of KLSH relies heavily on the central limit theorem. One crucial question that is not addressed in~\cite{klsh} is the existence of $\mathcal{N}(0,I_{\infty})$ in the case where the kernel function is based on an infinite-dimensional embedding (such as the Gaussian kernel).  Unfortunately, there is no such canonical Gaussian distribution in an RKHS, as given by the following lemma.
\begin{lemma}
\cite{GPs} A Gaussian distribution with covariance operator $C$ in a Hilbert space exists if and only if, in an appropriate base, $C$ has a diagonal form with non-negative eigenvalues and the sum of these eigenvalues is finite.
\label{lemma:clt}
\end{lemma}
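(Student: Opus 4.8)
The plan is to prove the equivalence through the two standard directions, relying on the spectral theorem for positive self-adjoint operators and on the integrability of the norm under a Gaussian measure on a separable Hilbert space $\mathcal{H}$. For the necessity direction, I would start from a Gaussian measure $\mu$ on $\mathcal{H}$ with mean $m$ and covariance operator $C$ defined by $\langle C h_1, h_2\rangle = \int_{\mathcal{H}} \langle x - m, h_1\rangle \langle x - m, h_2\rangle \, d\mu(x)$, which is manifestly self-adjoint and positive semi-definite. To obtain the diagonal form and the finiteness claim together, I would first compute, for an arbitrary orthonormal basis $\{e_i\}$, the trace $\sum_i \langle C e_i, e_i\rangle = \sum_i \int_{\mathcal{H}} \langle x-m,e_i\rangle^2 \, d\mu(x) = \int_{\mathcal{H}}\|x-m\|^2 \, d\mu(x)$, where the interchange of summation and integration is justified by Tonelli's theorem since the integrand is non-negative. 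Once this quantity is shown to be finite, $C$ is trace-class, hence compact, and the spectral theorem yields an orthonormal eigenbasis with non-negative eigenvalues $\lambda_i$ summing to the finite trace, which is exactly the asserted diagonal form.

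The main obstacle is justifying that the second moment $\int_{\mathcal{H}} \|x-m\|^2 \, d\mu(x)$ is finite, which is not obvious a priori. This is precisely the content of Fernique's theorem, which guarantees that for any Gaussian measure on a separable Hilbert space there exists a constant $\alpha > 0$ with $\int_{\mathcal{H}} e^{\alpha \|x\|^2} \, d\mu(x) < \infty$. In particular every polynomial moment is finite, so the trace computation above is finite and we conclude $\sum_i \lambda_i < \infty$. I expect this step, rather than the purely operator-theoretic bookkeeping, to carry the real weight of the argument, since it is the single place where the intrinsic structure of Gaussian measures on infinite-dimensional spaces is invoked.

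For the sufficiency direction I would reverse the construction. Given a self-adjoint positive semi-definite operator $C$ that is diagonal in an orthonormal basis $\{e_i\}$ with eigenvalues $\lambda_i \geq 0$ satisfying $\sum_i \lambda_i < \infty$, I would define the random element $X = m + \sum_i \sqrt{\lambda_i}\, \xi_i e_i$, where the $\xi_i$ are independent standard normal variables. The key step is to show this series converges in $\mathcal{H}$ almost surely: by orthogonality of the summands one has $\mathbb{E}\,\|\sum_i \sqrt{\lambda_i}\,\xi_i e_i\|^2 = \sum_i \lambda_i < \infty$, so the series converges in $L^2(\mathcal{H})$ and, via the martingale property of the partial sums, almost surely as well. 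It is here that the finite-trace hypothesis is indispensable. I would then verify that the law of $X$ is Gaussian by checking that each one-dimensional projection $\langle X, h\rangle = \langle m, h\rangle + \sum_i \sqrt{\lambda_i}\,\xi_i \langle e_i, h\rangle$ is a convergent sum of independent Gaussians, hence Gaussian, and that the induced covariance form recovers $C$. This produces the desired Gaussian distribution and closes the equivalence.
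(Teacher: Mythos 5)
This lemma is not proved in the paper at all: it is a classical result imported verbatim from the cited reference \cite{GPs}, so there is no in-paper argument to compare against. Your proposal is the standard textbook proof of that classical theorem and is essentially correct. In the necessity direction you correctly identify the one genuinely nontrivial step, namely that the second moment $\int_{\mathcal{H}}\|x-m\|^2\,d\mu(x)$ is finite; Fernique's theorem is the usual way to get this (an alternative that avoids Fernique is to note that $\sum_i \langle x-m,e_i\rangle^2$ converges almost surely because $x\in\mathcal{H}$, and then apply the Kolmogorov three-series theorem to the independent Gaussian coordinates in the eigenbasis to force $\sum_i \lambda_i<\infty$). Once the trace is finite, compactness and the spectral theorem give the diagonal form as you say. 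The sufficiency direction via the Karhunen--Lo\`eve--type series $X=m+\sum_i\sqrt{\lambda_i}\,\xi_i e_i$, with $L^2$ convergence from the finite trace upgraded to almost-sure convergence by the martingale (It\^o--Nisio/L\'evy) argument, is also the standard construction, and your verification that one-dimensional projections are Gaussian with the correct covariance closes the loop. The only implicit hypothesis worth flagging is separability of $\mathcal{H}$, which is needed for Fernique's theorem as usually stated and is harmless in the RKHS setting of this paper.
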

As implied by Lemma \ref{lemma:clt}, the convergence to the standard Gaussian in an infinite-dimensional Hilbert space is not grounded, as the eigenvalues of the covariance operator sum to infinity.\footnote{Note that the central limit theorem does still apply in Hilbert spaces, but the covariance operators must always have finite trace.}  As such, the motivation for KLSH is problematic at best and, at worst, could render KLSH inappropriate for many of the retrieval settings for which it was specifically designed. At the same time, KLSH has shown solid empirical performance on kernels associated with infinite-dimensional $\mathcal{H}$ \cite{klsh}. How can we explain the discrepancy between the empirical performance and the lack of a solid theoretical motivation?  We resolve these issues in the next section.

\section{A New Interpretation of KLSH}
\label{sec:interpretation}
In the following, we will provide a simple but powerful reinterpretation of KLSH, which will allow us to circumvent the aforementioned issues with infinite-dimensional $\mathcal{H}$.  In particular, we will show that KLSH may be viewed precisely as KPCA+LSH, except that the Gaussian vectors drawn for LSH are drawn via the CLT in the KPCA projected space.

\vspace{1mm}
\noindent\textbf{KLSH as Explicit Embedding}.
Let us take a deeper look at the hash function \eqref{eqn:klsh}. Utilizing the eigen-decomposition of the covariance $C$, we can write
\begin{align}
	g(\Phi(\bm{x})) &= \sum_{i=1}^{d_{\Phi}}\frac{1}{\sqrt{\lambda_i}}(\bm{v}_i^T\Phi(\bm{x})) \cdot (\bm{v}_i^T\tilde{\bm{z}}_t) \nonumber \\
	&= \sum_{i=1}^{d_{\Phi}}(\bm{v}_i^T\Phi(\bm{x})) \cdot \bigg (\frac{1}{\sqrt{\lambda_i}}\bm{v}_i^T\tilde{\bm{z}}_t \bigg ),
	\label{eqn:ex_embed}
\end{align}
where $h(\Phi(\bm{x})) = \text{sign}[g(\Phi(\bm{x}))]$, $\lambda_1\ge\cdots\ge\lambda_m\ge\cdots\ge0$ are the eigenvalues of $C$ with $\bm{v}_i$ the corresponding eigenvectors. In many situations, the dimension $d_\Phi$ of $\Phi$ is infinite.

If we perform a truncation at $k$, we obtain a finite-dimensional representation for $\Phi$ with $\sum_{i=k+1}^{d_\Phi}\lambda_i$ as the expected approximation error.  The resulting sum in~\eqref{eqn:ex_embed} can be viewed as an inner product between two $k$-dimensional vectors.
 Specifically, the first vector is $(\bm{v}_1^T\Phi(\bm{x}), \bm{v}_2^T\Phi(\bm{x}),\ldots, \bm{v}_k^T\Phi(\bm{x}))$, which is simply the projection of $\Phi(\bm{x})$ onto the subspace spanned by the top $k$ principal components.  For the other $k$-dimensional vector $(\frac{1}{\sqrt{\lambda_1}}\bm{v}_1^T\tilde{\bm{z}}_t,\ldots,\frac{1}{\sqrt{\lambda_k}}\bm{v}_k^T\tilde{\bm{z}}_t)$, we plug in the definition of $\tilde{\bm{z}_t}$ to obtain for each $i$:
 \begin{align}
 	\frac{1}{\sqrt{\lambda_i}}\bm{v}_i^T\tilde{\bm{z}}_t &= \frac{\sqrt{t}}{\sqrt{\lambda_i}} \bigg (\frac{1}{t}\sum_{j\in S}\bm{v}_i^T\Phi(\bm{x}_j) - \bm{v}_i^T\mu \bigg ) \nonumber \\
	&= \frac{\sqrt{t}}{\sqrt{\lambda_i}}(\bar{y}_{i,t} - \mu_i)   \nonumber \\
	&\overset{appr}\sim \mathcal{N}(0,1), \nonumber
\end{align}
where $\bar{y}_{i,t} = \frac{1}{t}\sum_{j\in S}\bm{v}_i^T\Phi(\bm{x}_j)$ (i.e., the $t$ sample average of the projection of $\Phi$ onto the $\bm{v}_i$-direction), $\mu_i$ is the projection of $\mu$ onto the $\bm{v}_i$-direction, and the last approximation comes from the central limit theorem. Since we do not know $\mu$ and $C$ explicitly, we can use plug-in sample estimates.

Under the above interpretation, we see that~\eqref{eqn:ex_embed} after truncation may be viewed as computing an inner product between two $k$-dimensional vectors: the first vector is the data point $\Phi(\bm{x})$ after projecting via KPCA, and the second vector is a Gaussian random vector.  In other words, this sum may be interpreted as first computing KPCA, and then using the CLT to draw random vectors for LSH in the projected space.  Since KLSH uses a sample of $m$ data points from the data set to estimate the covariance and mean, we automatically obtain truncation of the sum as the estimated covariance is at most ($m$-1)-dimensional (it is $(m$-$1)$-dimensional because of the centering operation).   We can therefore see that KLSH performs LSH after projecting to an ($m$-$1$)-dimensional space via principal component analysis in $\mathcal{H}$.  Thus, KLSH is conceptually equivalent to applying LSH after a PCA projection in $\mathcal{H}$. The only difference is that KLSH uses the central limit theorem to approximately draw Gaussian vectors in the projected space, whereas standard KPCA+LSH draws the random vectors directly.  Note that the central limit theorem is known to converge at a rate of $O(t^{-1/2})$, and in practice obtains very good approximations when $t \geq 30$ \cite{Feller,klsh}; as such, results of the two algorithms are within small random variations of each other.  

In summary, we are now able to explain the empirical performance of~\cite{klsh} and also avoid the technical issues with the non-existence of standard Gaussian distributions in infinite-dimensional Hilbert spaces. As we will see, this perspective will lead to a performance bound for KLSH in Section~\ref{sec:bounds}, which also sheds light on simple techniques which could potentially improve the retrieval performance.

\vspace{1mm}
\noindent\textbf{Comparison with the Nystr\"om method.}
As arguably the most popular kernel matrix approximation method, the Nystr\"om method \cite{generalization_nystrom} uses a low-rank approximation to the original kernel matrix in order to reduce the complexity of computing inverses. Although KLSH bears some similarities with the Nystr\"om method as pointed out in \cite{multi_klsh}, here we briefly clarify the differences between these approaches.

Given $m$ anchor points, $\{\hat{\bm{x}}_1,\ldots,\hat{\bm{x}}_m\}$, the Nystr\"om method constructs a rank-$r$ approximation to the kernel matrix over all the database items as 
\[
	\tilde{K}_r = K_{nm}\hat{K}_r^{\dagger}K_{nm}^T,
\]
where $\tilde{K}_r$ is the rank-$k$ approximation to the original kernel matrix $K$, $K_{nm}$ is the $n\times m$ kernel matrix with the $(i,j)$-th entry as $\kappa(\bm{x}_i,\hat{\bm{x}}_j), i=1,\ldots,n,j=1,\ldots,m$, and $\hat{K}_r^{\dagger}$ is the pseudo-inverse of $\hat{K}_r$.
 which is the best rank-$k$ approximation to the $m\times m$ kernel matrix $\hat{K}$ formed by the selected anchor points.
We can write the best rank-$r$ approximation as $\hat{K}_r = \hat{U}_r\hat{D}_r\hat{U}_r^T$, where $\hat{D}_r$ is the $r\times r$ diagonal matrix whose diagonal entries are the leading $r$ eigenvalues of $\hat{K}$, and $\hat{U}_r$ is the $m\times r$ matrix with each column as the corresponding eigenvector of $\hat{K}$. Now, we can derive a vector representation for each data as
\[
	\psi(\bm{x}) = \hat{D}_r^{-1/2}\hat{U}_r^T(\kappa(\bm{x},\hat{\bm{x}}_1), \ldots,\kappa(\bm{x},\hat{\bm{x}}_m))^T.
\]

This is very similar to the representation used in equation \eqref{eqn:final_klsh}, where we can write the vector representation (considering also a rank-$r$ approximation) as
\[
	\phi(\bm{x}) = \bar{D}_r^{-1/2}\bar{U}_r^T(\kappa(\bm{x},\hat{\bm{x}}_1), \ldots,\kappa(\bm{x},\hat{\bm{x}}_m))^T,
\]
where $\bar{D}_r$ and $\bar{U}_r$ are the diagonal matrix with leading $r$ eigenvalues and the $m\times r$ matrix with each column the corresponding eigenvectors of $\bar{K}$, respectively. $\bar{K}$ is the centered version of $\hat{K}$. 

Although they look similar in format, the two representations turn out to yield very different hashing performance. Even though the Nystr\"om method aims to approximate the whole kernel matrix, we point out that the ``centering" operation used by KLSH is essential to give strong performance, especially when $m$ is relatively small.  We will empirically explore this issue further in Section~\ref{sec:experiment}.

\section{Theoretical Analysis}

In this section, we present our main theoretical results, which consist of a performance bound for the KLSH method analogous to results for standard LSH. Perhaps more importantly, our analysis suggests two simple techniques to improve the empirical performance of the KLSH method.

\subsection{A performance bound}
\label{sec:bounds}
We present a theoretical analysis of KLSH via the ``KPCA+LSH" perspective.  We make the assumption that KLSH draws truly random Gaussian vectors in the projected subspace, for simplicity of presentation.  A more refined analysis would also include the error included by approximating the Gaussian random vectors via the central limit theorem.  Such analysis should be possible to incorporate via the Berry-Esseen theorem \cite{Feller}; however, as discussed earlier, in practice the CLT-based random vectors provide sufficient approximations to Gaussian vectors (see also Section 6.6 of \cite{klsh_journal} for a more in-depth empirical comparison). 

We first formulate the setting and assumptions for our main results. Suppose $S = \{\bm{x}_1,\ldots,\bm{x}_n\}$ are $n$ i.i.d. random samples drawn from a probability measure $p$. Given a query $\bq$ and a similarity function $\kappa$, often referred to as a kernel function, we want to find $\oyq=\text{argmax}_{i\in S}\kappa(\bq, \bx_i)$. As is standard in the machine learning literature, when the kernel $\kappa$ is positive semi-definite, it can be (implicitly) associated with a feature map $\Phi$ that maps original data points $\bx$ into a high dimensional or even infinite dimensional feature space $\mathcal{H}$. The kernel function $\kappa(\bx, \by)$ gives the inner product $\langle \Phi(\bx), \Phi(\by) \rangle$ in the feature space. The feature map and $p$ together induce a distribution on $\mathcal{H}$, which has a covariance operator, denoted by $C$. Finally, let $\lambda_1\geq \lambda_2\geq \cdots$ and $\bm{v}_1,\bm{v}_2,\cdots$ be the corresponding eigenvalues and eigenvectors of $C$, respectively.

As we assume that we do not have the explicit formula for the feature map, we project $\Phi(\bx)$ onto a $k$-dimensional subspace $V_k$, which is spanned by the first $k$ principal components of $C$. Empirically, we use a sample of $m$ points to estimate $C$ and $V_k$. Note that after projection, the new kernel function becomes
\[
	\hk(\bx, \by) = \langle P_{\hat{V}_k}(\Phi(\bx)), P_{\hat{V}_k}(\Phi(\by))\rangle,
\]
which is no longer normalized. Here, $\hat{V}_k$ is the sample estimator of $V_k$. To fit into the standard LSH framework, we normalize this kernel by
\[
	\hk_n(\bx, \by) = \frac{\hk(\bx, \by)}{N(\bx)N(\by)},
\]
where $ \quad N(\bx) = \sqrt{\hk(\bx,\bx)} = \|P_{\hat{V}_k}(\Phi(\bx))\| \leq 1$. As points with small $N(\bx)$ may cause instability issues, we simply remove them and only keep points with $N(\bx) \geq 1-\sqrt{\lambda_k} -\eta$ for proper choice of $\eta$.  Note that this step should not affect the search result very much; in fact, we prove in Lemma~\ref{lemma:nx_lower} below that the optimal points $\oyq$ will not be eliminated with high probability.  See the appendix for a proof.

\begin{lemma}
Let $\delta_k = \frac{\lambda_k-\lambda_{k+1}}{2}$ and $\eta=\frac{2}{\delta_k\sqrt{m}}\left(1+\sqrt{\frac{\xi}{2}}\right)$, where $k$ is the total number of chosen principal components and $m$ is the number of points for estimating the eigen-space. With probability at least $1-e^{-\xi}$, for any point $\bx \in \mathcal{X}$, we have
\[
	N(\bx) > 1-\sqrt{\lambda_k}-\eta.
\]
\label{lemma:nx_lower}
\end{lemma}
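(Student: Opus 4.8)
The plan is to control the complementary projection $\|\hpp\Phi(\bx)\|$ and convert it into a lower bound on $N(\bx)$. Since the kernel is normalized, $\|\Phi(\bx)\| = 1$, so by the Pythagorean identity $N(\bx)^2 = \|\hp\Phi(\bx)\|^2 = 1 - \|\hpp\Phi(\bx)\|^2$, and the elementary inequality $1 - \sqrt{1-r^2} \le r$ (valid for $r\in[0,1]$) gives
\[
	1 - N(\bx) \le \|\hpp\Phi(\bx)\|.
\]
Thus it suffices to show $\|\hpp\Phi(\bx)\| \le \sqrt{\lambda_k} + \eta$ uniformly over $\bx$. First I would insert the true eigenspace projection and apply the triangle inequality,
\[
	\|\hpp\Phi(\bx)\| \le \|\pp\Phi(\bx)\| + \|(\hp - \p)\Phi(\bx)\|,
\]
splitting the error into a deterministic truncation term (using the true top-$k$ subspace $V_k$) and a random estimation term (the discrepancy between $\hat{V}_k$ and $V_k$).

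For the estimation term I would bound $\|(\hp-\p)\Phi(\bx)\| \le \|\hp - \p\|_{op}\,\|\Phi(\bx)\| = \|\hp - \p\|_{op}$ and then invoke a Davis--Kahan--type $\sin\Theta$ perturbation bound for spectral projectors, which gives $\|\hp - \p\|_{op} \le \|\hat{C} - C\|_{op}/\delta_k$, where $\delta_k = (\lambda_k - \lambda_{k+1})/2$ is precisely the half-gap appearing in the statement. It then remains to control the operator-norm deviation of the empirical covariance. Since $\|\Phi(\bx)\otimes\Phi(\bx)\|_{op}\le 1$, the map $(\hat{\bm{x}}_1,\ldots,\hat{\bm{x}}_m)\mapsto\|\hat{C} - C\|_{op}$ has bounded differences of size $2/m$, so McDiarmid's inequality together with the moment bound $\mathbb{E}\|\hat{C} - C\|_{op} \le 2/\sqrt{m}$ yields, with probability at least $1 - e^{-\xi}$,
\[
	\|\hat{C} - C\|_{op} \le \frac{2}{\sqrt{m}}\left(1 + \sqrt{\tfrac{\xi}{2}}\right).
\]
Dividing by $\delta_k$ produces exactly the quantity $\eta$, and simultaneously accounts for both the $1-e^{-\xi}$ confidence level and the $1/(\delta_k\sqrt{m})$ dependence.

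The remaining piece, and the step I expect to be the main obstacle, is the deterministic truncation bound $\|\pp\Phi(\bx)\| \le \sqrt{\lambda_k}$ holding uniformly for every $\bx\in\mathcal{X}$. Writing $\|\pp\Phi(\bx)\|^2 = \sum_{i>k}\langle \bm{v}_i,\Phi(\bx)\rangle^2$ makes clear that a naive termwise estimate is insufficient, since a single point can concentrate its energy on the tail eigendirections even when the tail \emph{eigenvalues} are small. The natural route is to exploit the reproducing-kernel structure and express the residual through $C^{1/2}$, so that $\|\pp\Phi(\bx)\|$ is governed by the restriction of $C$ to $V_k^{\perp}$, whose operator norm is $\lambda_{k+1}\le\lambda_k$; making this rigorous requires a source-type condition placing $\Phi(\bx)$ in the range of an appropriate power of $C$ with controlled preimage, which is where the care lies. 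Once this truncation bound is in place, combining the three displays gives $1 - N(\bx) \le \sqrt{\lambda_k} + \eta$, i.e. $N(\bx) > 1 - \sqrt{\lambda_k} - \eta$ on the event of probability at least $1-e^{-\xi}$, completing the argument.
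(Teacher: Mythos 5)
Your argument follows the same skeleton as the paper's proof: Pythagoras to reduce the claim to an upper bound on $\|\hpp\Phi(\bx)\|$, then a triangle-inequality split into the deterministic truncation term $\|\pp\Phi(\bx)\|$ and the random perturbation term $\|(\hpp-\pp)\Phi(\bx)\|$. For the perturbation term the paper simply cites Theorem~4 of the kernel-PCA convergence reference to get $\|\hpp-\pp\|\le \frac{2M}{\delta_k\sqrt{m}}(1+\sqrt{\xi/2})$ with $M=\sup_{\bx}\kappa(\bx,\bx)=1$; your plan to rederive this via a Davis--Kahan $\sin\Theta$ bound plus McDiarmid and a moment bound on $\|\hat{C}-C\|$ is essentially how that cited result is proved, so this part of your proposal is a legitimate, if more laborious, substitute and lands on the same constant $\eta$.

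The genuine gap is the one you yourself flag: you never establish $\|\pp\Phi(\bx)\|\le\sqrt{\lambda_k}$ uniformly over $\bx$, and you defer it to an unspecified ``source-type condition'' that is nowhere among the hypotheses of the lemma. This step cannot be skipped, because without it the final inequality does not follow. Your diagnosis of why it is hard is exactly right: $\|\pp\Phi(\bx)\|^2=\sum_{i>k}\langle \bm{v}_i,\Phi(\bx)\rangle^2$, and the eigenvalues of $C$ control these coefficients only \emph{in expectation} ($\mathbb{E}\langle \bm{v}_i,\Phi(X)\rangle^2=\lambda_i$), not pointwise; a single atom of the distribution can sit entirely in a tail eigendirection, making $\|\pp\Phi(\bx)\|=1$ while $\lambda_k$ is small. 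It is worth noting that the paper's own proof is no better here: it asserts $\|\pp\Phi(\bx)\|\le\sqrt{\lambda_k}\|\Phi(\bx)\|$ ``by the definition of operator norms,'' which is false as stated, since $\pp$ is an orthogonal projection of operator norm $1$, not $\sqrt{\lambda_k}$. So your proposal is incomplete, but it is incomplete at precisely the point where the published argument is also unsound; closing it honestly would require an extra assumption (e.g.\ that $\Phi(\bx)$ lies in the range of $C^{1/2}$ with uniformly bounded preimage, under which $\|\pp\Phi(\bx)\|=\|\pp C^{1/2}g_{\bx}\|\le\sqrt{\lambda_{k+1}}\,\|g_{\bx}\|$), and that assumption should then be stated in the lemma.
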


We can see that the probability of eliminating $\oyq$ is small given the choice of $\eta$. Now we can give our main result in the following theorem.
\begin{theorem}
\label{thm:klsh}
Consider an $n$-sample database $S = \{\bm{x}_1,\ldots,\bm{x}_n\}$ and a query point $\bq$. For any $\epsilon,\xi>0$, with success probability at least $(1-e^{-\xi})/2$ and query time dominated by $O(n^{\frac{1}{1+\epsilon}})$ kernel evaluations, the KLSH algorithm retrieves a nearest neighbor $\hat{\bm{y}}_{\bm{q},k}$ with corresponding bound
\begin{equation}
	\kappa(\bq, \yqk) \ge (1+\epsilon)(1-\sqrt{\lambda_k}-\eta)\kappa(\bq, \oyq) - \epsilon -(2+\epsilon) \left( \sqrt{\lambda_k}+\eta \right)^2,
	\label{eqn:klsh_bound} 
\end{equation}
if we only keep those points with $N(\bx)>1-\sqrt{\lambda_k}-\eta$ for consideration, where $\eta=\frac{2}{\delta_k\sqrt{m}}\left(1+\sqrt{\frac{\xi}{2}}\right)$ and $0<\eta<1-\sqrt{\lambda_k}$.

\end{theorem}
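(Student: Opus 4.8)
The plan is to reduce the claim to the standard LSH guarantee applied in the projected, normalized space $\hk_n$, and then to transfer that guarantee back to the original kernel $\kappa$ by controlling (i) the residual $\hpp\Phi$ discarded by the projection and (ii) the normalization factors $N(\cdot)$. Concretely, since $\hk_n$ is a normalized similarity in $[0,1]$, applying standard LSH to it retrieves, with probability at least $1/2$ and using $O(n^{1/(1+\epsilon)})$ kernel evaluations, a point $\yqk$ whose Hamming distance $1-\hk_n$ is within a factor $(1+\epsilon)$ of the best. Rearranging this distance guarantee and using that the true optimum $\oyq$ is itself a candidate (it survives the $N(\bx)>1-\sqrt{\lambda_k}-\eta$ filter with probability $1-e^{-\xi}$ by Lemma~\ref{lemma:nx_lower}, so $\hk_n(\bq,\oyqk)\ge\hk_n(\bq,\oyq)$), I would obtain the single master inequality $\hk_n(\bq,\yqk) \ge (1+\epsilon)\hk_n(\bq,\oyq) - \epsilon$ in the projected space.

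The core of the argument is then to convert this inequality, stated for $\hk_n$, into the stated bound for $\kappa$. Writing $\alpha = \sqrt{\lambda_k}+\eta$, the key quantitative fact --- which is exactly what the proof of Lemma~\ref{lemma:nx_lower} establishes --- is that, on the same high-probability event, $\|\hpp\Phi(\bx)\|\le\alpha$ for every relevant point $\bx$. Two consequences follow by Cauchy--Schwarz applied to the orthogonal decomposition $\Phi(\bx)=\hp\Phi(\bx)+\hpp\Phi(\bx)$: first, $|\kappa(\bx,\by)-\hk(\bx,\by)| = |\langle \hpp\Phi(\bx),\hpp\Phi(\by)\rangle| \le \alpha^2$; and second, $N(\bx)=\|\hp\Phi(\bx)\| = \sqrt{1-\|\hpp\Phi(\bx)\|^2}\ge\sqrt{1-\alpha^2}$, so any product of two normalizers satisfies $N(\bx)N(\by)\ge 1-\alpha^2 \ge 1-\alpha$. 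These let me bound each side of the master inequality. On the right I would use $N(\bq)N(\oyq)\le 1$ together with $\hk\ge\kappa-\alpha^2$ to get $\hk_n(\bq,\oyq)\ge\kappa(\bq,\oyq)-\alpha^2$; on the left I would use $\hk = \hk_n\cdot N(\bq)N(\yqk)$, the normalizer bound $N(\bq)N(\yqk)\ge 1-\alpha$, and $\kappa\ge\hk-\alpha^2$ to get $\kappa(\bq,\yqk)\ge(1-\alpha)\hk_n(\bq,\yqk)-\alpha^2$.

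Substituting both bounds into the master inequality yields $\kappa(\bq,\yqk)\ge (1-\alpha)\big[(1+\epsilon)(\kappa(\bq,\oyq)-\alpha^2)-\epsilon\big]-\alpha^2$, after which the final form~\eqref{eqn:klsh_bound} follows by the elementary relaxations $(1-\alpha)\epsilon\le\epsilon$ and $(1-\alpha)(1+\epsilon)\alpha^2+\alpha^2\le(2+\epsilon)\alpha^2$ (both valid since $0<\alpha<1$), keeping the leading term $(1+\epsilon)(1-\alpha)\kappa(\bq,\oyq)$ intact. For the probability, the LSH step succeeds with probability $1/2$ and is independent of the sampling that determines $\hat{V}_k$, so intersecting with the event of Lemma~\ref{lemma:nx_lower} gives overall success probability at least $(1-e^{-\xi})/2$. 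I expect the genuine obstacle to sit not in this theorem but in the uniform residual bound $\|\hpp\Phi(\bx)\|\le\alpha$ behind Lemma~\ref{lemma:nx_lower}, which needs a perturbation argument comparing the empirical projector $\hp$ with its population counterpart $\p$; granting that lemma, the remaining work here is careful but routine bookkeeping of the normalizers, together with the mild assumption that the relevant similarities are nonnegative (so that dividing and multiplying by $N(\cdot)N(\cdot)\le 1$ preserves the inequality directions).
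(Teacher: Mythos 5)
Your proposal is correct and follows essentially the same route as the paper's proof: reduce to the standard LSH guarantee on the normalized projected kernel $\hk_n$, use Lemma~\ref{lemma:nx_lower} to ensure $\oyq$ survives the filter, and transfer back to $\kappa$ via the uniform residual bound $\|\hpp\Phi(\bx)\|\le\sqrt{\lambda_k}+\eta$ (the content of Lemma~\ref{lemma:res_inner}). The only differences are cosmetic bookkeeping --- you fold the two residual inner products into a single $|\kappa-\hk|\le\alpha^2$ estimate and bound the product $N(\bq)N(\yqk)$ rather than the ratio $N(\yqk)/N(\oyq)$ --- and the nonnegativity caveat you flag is equally implicit in the paper's own argument.
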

Our proof is given in the appendix, which utilizes existing bounds for kernel PCA~\cite{pca_convergence} and the standard LSH performance bound~\cite{lsh, plsh, simhash}.

\subsection{Discussion and Extensions}
\label{sec:tricks}
\noindent \textbf{Understanding the Bound.} The key ingredient of the bound~\eqref{eqn:klsh_bound} is the error term $\sqrt{\lambda_k}+\frac{2}{\delta_k\sqrt{m}}\left(1+\sqrt{\xi/2}\right)$.  Observe that, as $k$ and $m$ become larger at appropriate rates, both $\sqrt{\lambda_k}$ and $\eta$ will go to zero.  Therefore, as the number of chosen data points and KPCA projections gets large, the bound approaches
\[
\kappa(\bq, \yqk) \ge (1+\epsilon)\kappa(\bq, \oyq) - \epsilon.
\]
Further, as the parameter $\epsilon$ from LSH decreases to zero, this bound guarantees us that the true nearest neighbors will be retrieved.
Also observe that, with a fixed $k$, increasing $m$ will always improve the bound, but the $\sqrt{\lambda_k}$ term will be non-zero and will likely yield retrieval errors. This has been empirically shown in~\cite{kpca_lsh}, namely that the performance of KPCA+LSH saturates when $m$ is large enough, usually in the thousands.

\vspace{1mm}
\noindent \textbf{Low-Rank Extension.} On the other hand, with a fixed $m$, there is a trade-off between decreasing $\lambda_k$ and increasing $\frac{1}{\delta_k}$, similar to the classic bias-variance trade-off in statistics; we expect $\frac{1}{\delta_k}$ to increase as $k$ increases, whereas $\lambda_k$ will decrease for larger $k$.  As a result, for a fixed $m$, the best choice of $k$ may actually not be $k=m-1$, as in the original KLSH algorithm, but could be smaller.   In light of this, we introduce a \textit{low-rank extension of the standard KLSH}:
instead of performing LSH in the  ($m$-$1$)-dimensional subspace of $\mathcal{H}$ as in KLSH, we may actually achieve better results by only projecting into a smaller $r$-dimensional subspace obtained by the top $r$ principal components. Specifically, we replace $\bm{w}$ in equation \eqref{eqn:final_klsh} with
\[
	\bm{w}_r = \bar{K}_{r}^{-1/2}\bm{e}_{S_t},
\]
where $\bar{K}_r$ is the best rank-$r$ approximation to $\bar{K}$. In \cite{kpca_lsh}, the authors recommend to use the same number of principal components as the number of hash bits when applying KPCA+LSH, at least for the $\chi^2$ kernel. We will show in Section~\ref{sec:experiment} that in fact the optimal choice of rank is dependent on the dataset and the kernel, not the number of hash bits. Moreover, the performance of KLSH can be quite sensitive on the choice of $r$.

\vspace{1mm}
\noindent \textbf{Extension via Monotone Transformation.} Another relevant factor is the decaying property of the eigenvalues of the covariance operator $C$ in the induced RKHS $\mathcal{H}$, which not only affects the $\lambda_k$ and $\delta_k$, but also the constant which corresponds to the estimation error. Unlike kernel methods for classification and detection, there is a unique property regarding the use of kernel functions for retrieval: applying a monotone increasing transformation to the original kernel function does not change the ranking provided by the original kernel. 

Thus, we can explore popular transformations that can reward us better retrieval performance. For example, given a kernel $\kappa(\bm{x},\bm{y})$, consider an exponential transformation,
\begin{equation}
	\kappa_s(\bm{x},\bm{y}) = \exp\left( s*(\kappa(\bm{x},\bm{y})-1) \right),
	\label{eqn:transformation}
\end{equation}
where $s>0$ is the scale parameter and $\kappa_s(\bm{x},\bm{y})\in(0,1]$. We can see that the ranking of the nearest neighbors stays the same no matter what value we choose for $s$ as long as $s>0$. However, changing the scaling impacts the eigenvalues of the covariance operator.  In particular, it often slows down the decay with $s>1$ and will eliminate the decay entirely when $s\to\infty$.  
In the context of our bound, that means the $1/\delta_k$ term will increase more slowly.  Moreover, this also reduces the estimation error of the eigen-space. However, with a value of $s$ too large, we also need a very large $k$ in order to keep the $\sqrt{\lambda_k}$ term small.  Thus the scaling must be carefully tuned to balance the trade-off.

\section{Empirical Analysis}
\label{sec:experiment}
We now empirically validate the techniques proposed in Section \ref{sec:tricks}. A comparison with the Nystr\"om method is also reported. 

\subsection{Datasets and evaluation protocols}
We perform our experiments on three benchmark datasets commonly used in large-scale image search comparisons: MIRFLICKR \cite{flickr}, which consists of 1 million randomly sampled Flickr images represented using 150-dimensional edge histogram descriptors; SIFT1M \cite{data}, which has 1 million 128-dimensional local SIFT \cite{sift} descriptors; and GIST1M \cite{data}, which is comprised of 1 million 960-dimensional GIST \cite{gist} descriptors. The query size for all three datasets is 10,000. For MIRFLICKR and GIST1M, all descriptors whose values are all zero were removed from the database.

Throughout the comparisons, we set the parameters of KLSH as follows: we generate a 256-bit hash code, and set $m=1000, t=50$ to form the ``random" projection matrix, which is equivalent to performing kernel PCA with a sample set of size $m=1000$. The choice of the number of bits is to largely suppress the performance variation due to randomness, but the conclusions made here are consistent among different choices of number of bits. 

We consider two popular kernels\footnote{Here, we do not consider fixed-dimensional kernels such as the Hellinger kernel, which are uninteresting for our setting.} from the vision community, namely the $\chi^2$ and intersection kernels for histograms:
\begin{align}
	(\chi^2): & \kappa(\bm{x},\bm{y}) = \sum_{i=1}^{d}\frac{2\bm{x}[i]\bm{y}[i]}{\bm{x}[i]+\bm{y}[i]} \nonumber \\
	(\text{Intersection}): &\kappa(\bm{x},\bm{y}) = \sum_{i=1}^{d} \min(\bm{x}[i], \bm{y}[i]), \nonumber
\end{align}
where $\bm{x}[i]$ is the $i$-th entry of $\bm{x}$. We perform exhaustive nearest neighbors search and evaluate the quality of retrieval using the Recall@$R$ measure, which is the proportion of query vectors for which the nearest neighbor is ranked in the first $R$ positions. This measure indicates the fraction of queries for which the nearest neighbor is retrieved correctly if a short list of $R$ is verified in the original space. Note here, $R=100$ only represents $0.01\%$ of the database items. We focus on this measure since it gives a direct measure of the nearest neighbor search performance.

Note that we are deliberately not comparing to other hashing schemes such as semi-supervised hashing methods or optimization-based methods; our goal is to demonstrate how our analysis can be used to improve results of KLSH.  Existing work on KLSH and variants has considered comparisons between KLSH and other techniques~\cite{klsh} and, given space restrictions, we do not focus on such comparisons here.

\begin{figure*}[t]
\centering
\def\arraystretch{0.4}
\setlength{\tabcolsep}{3pt}
\begin{tabular}{ccc}
	\includegraphics[width=0.32\textwidth]{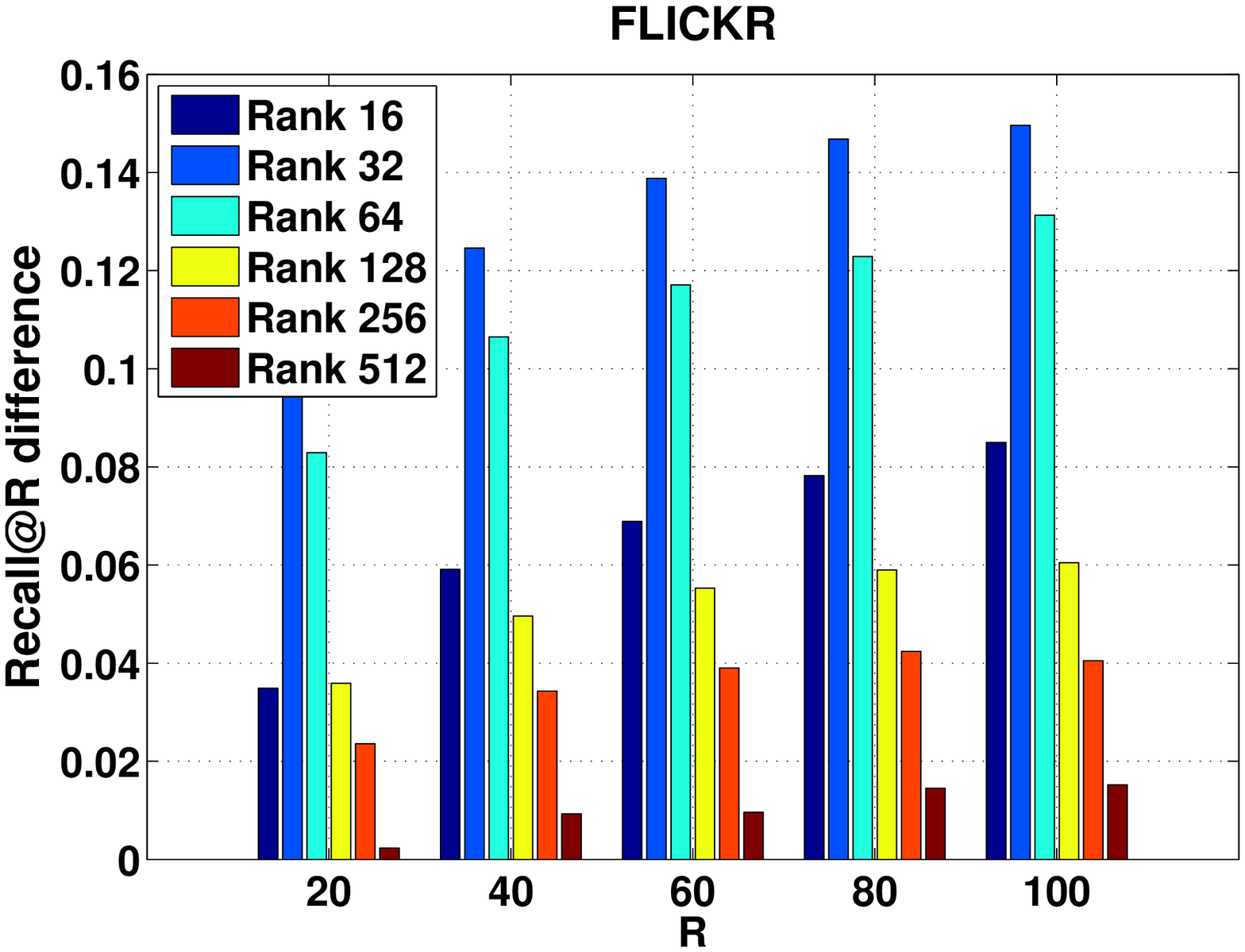} &
	\includegraphics[width=0.32\textwidth]{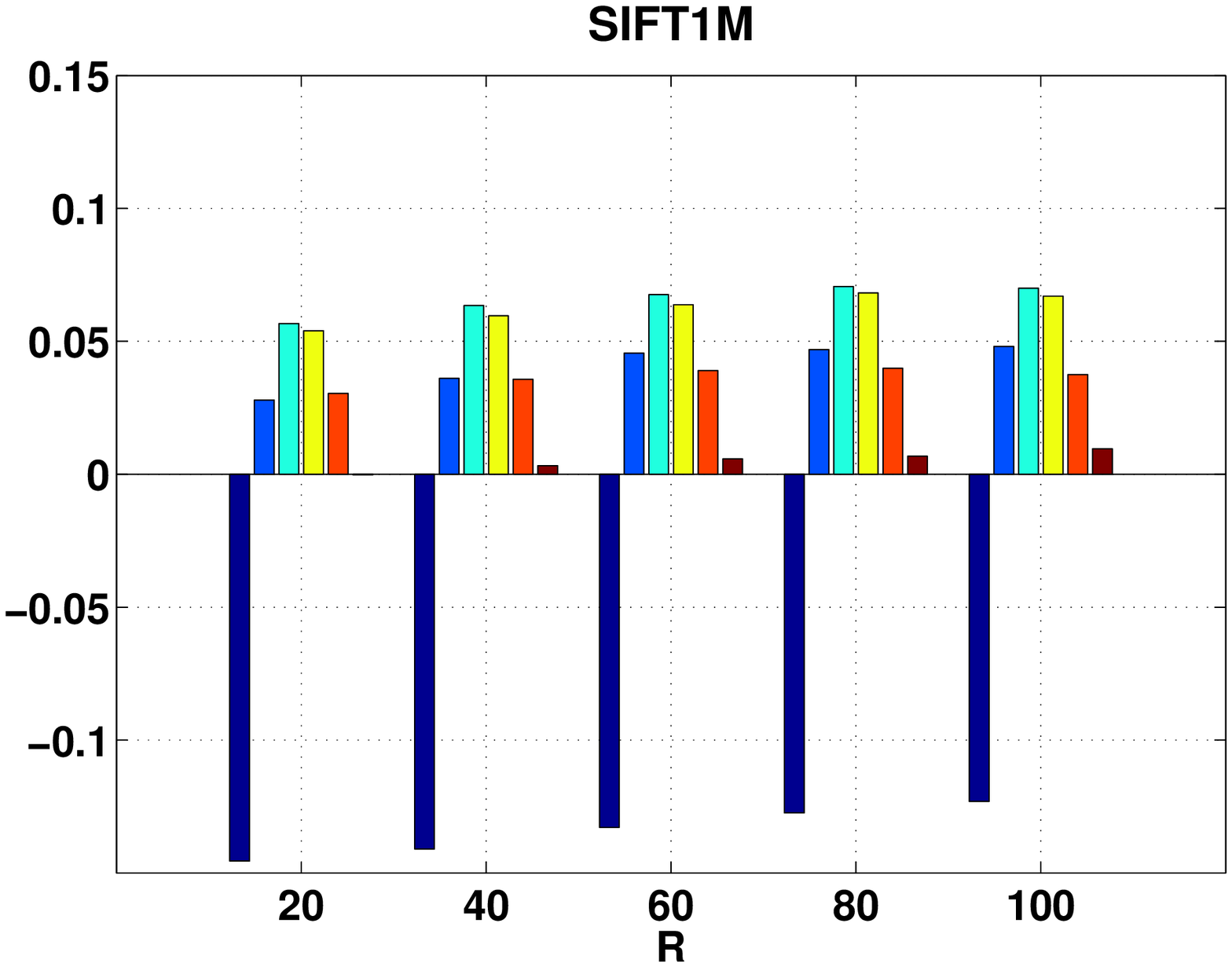} &
	\includegraphics[width=0.32\textwidth]{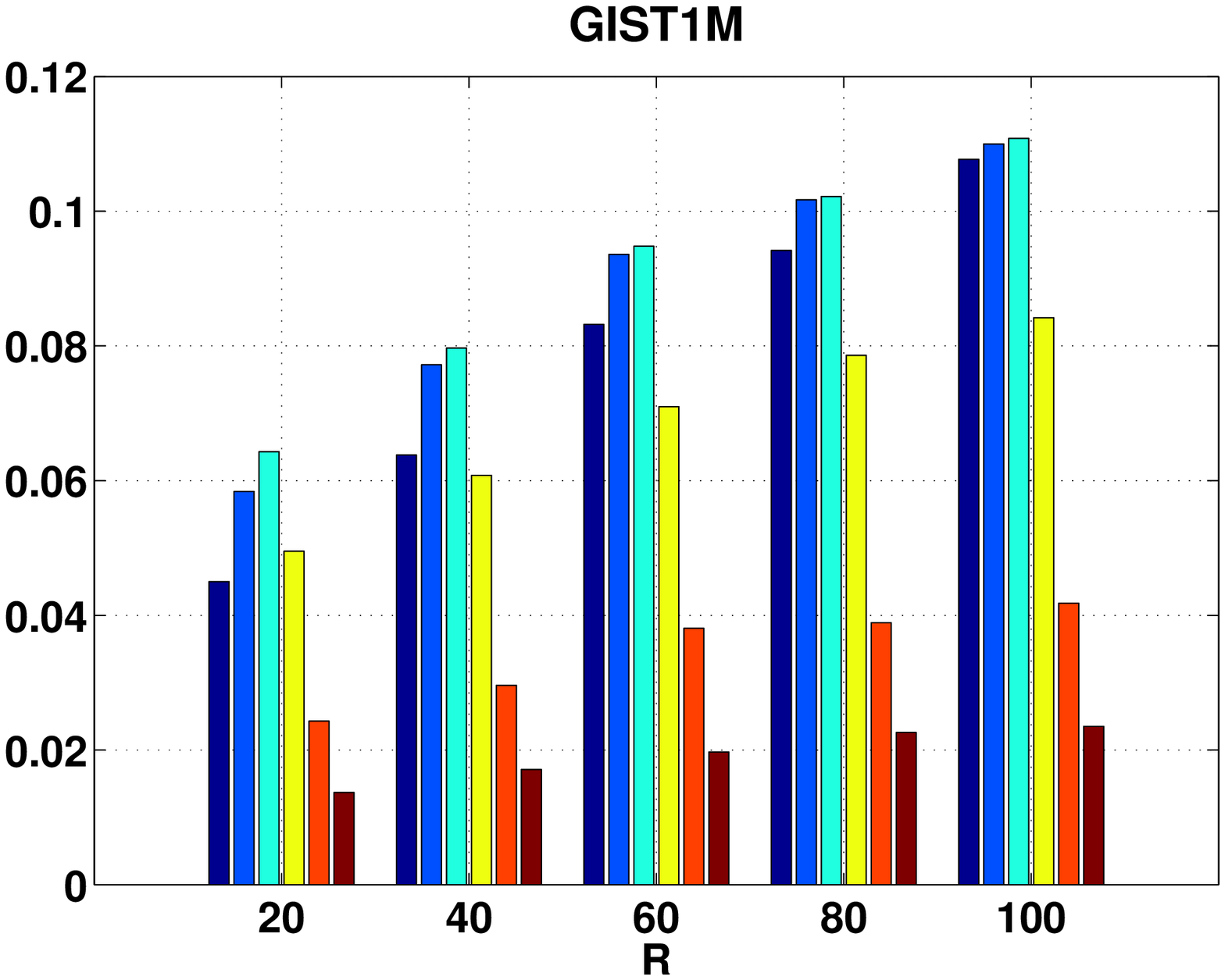} \\
	 & \text{(a) $\chi^2$ kernel} & \\
	\includegraphics[width=0.32\textwidth]{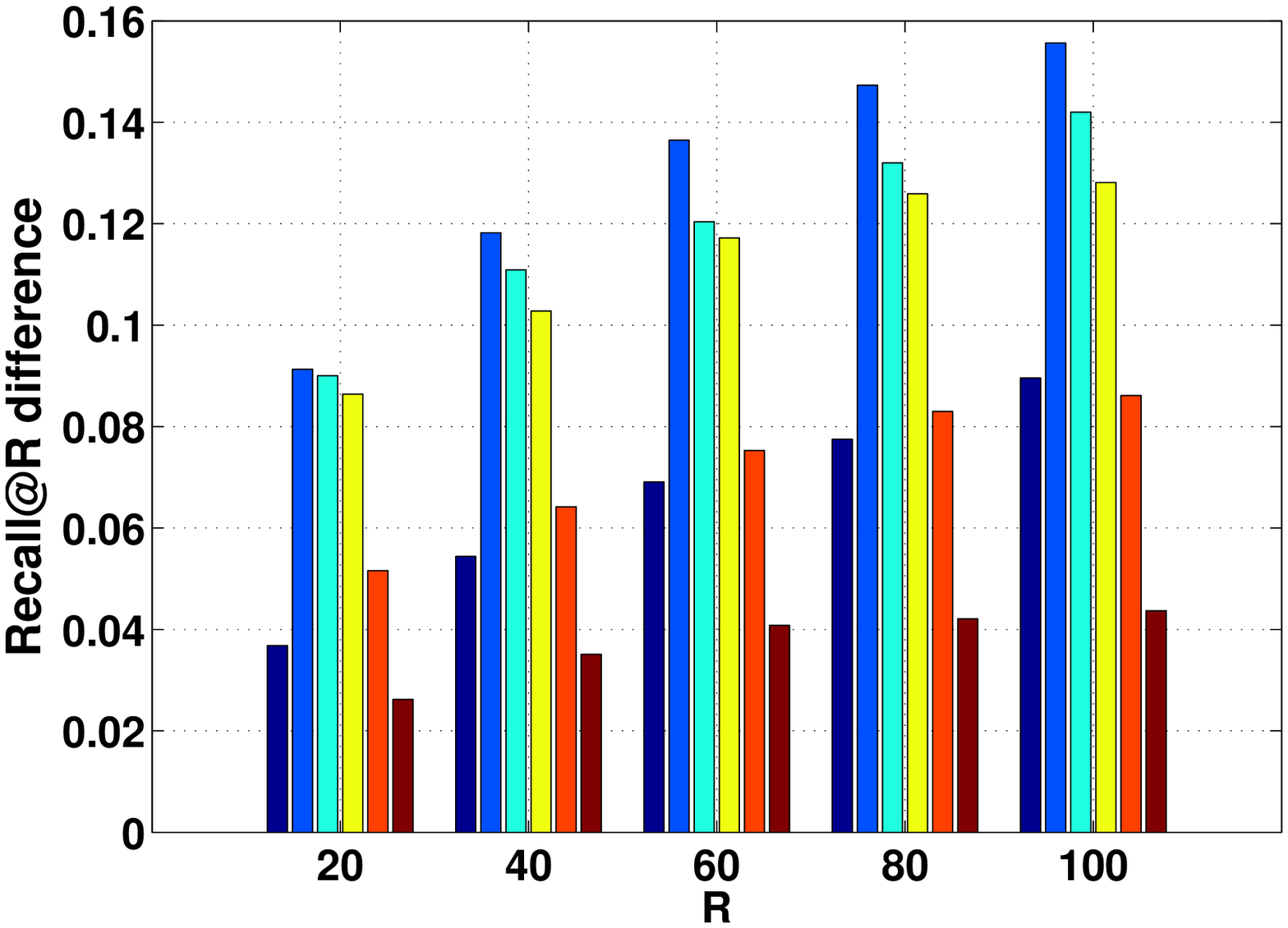} &
	\includegraphics[width=0.32\textwidth]{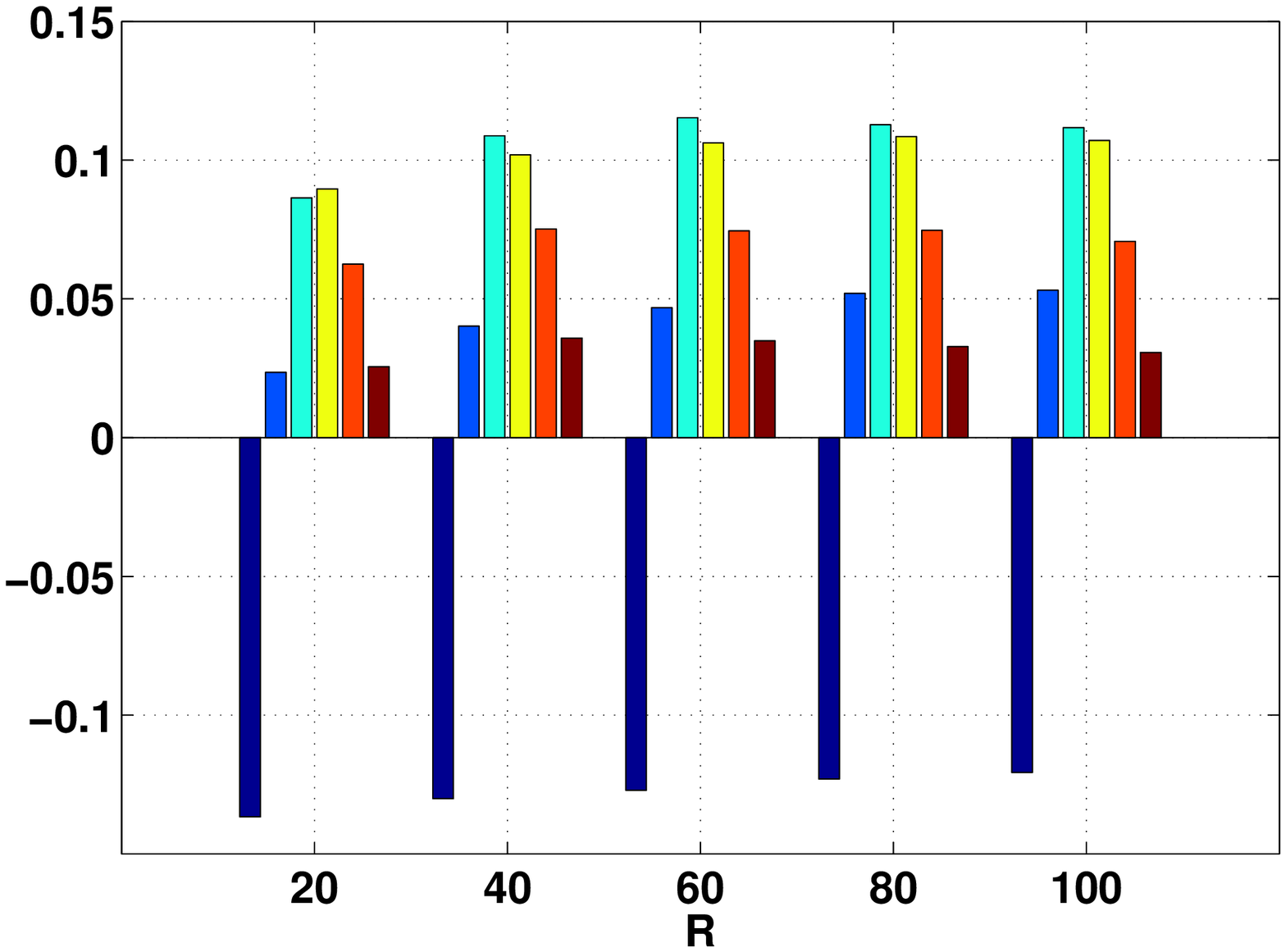} &
	\includegraphics[width=0.32\textwidth]{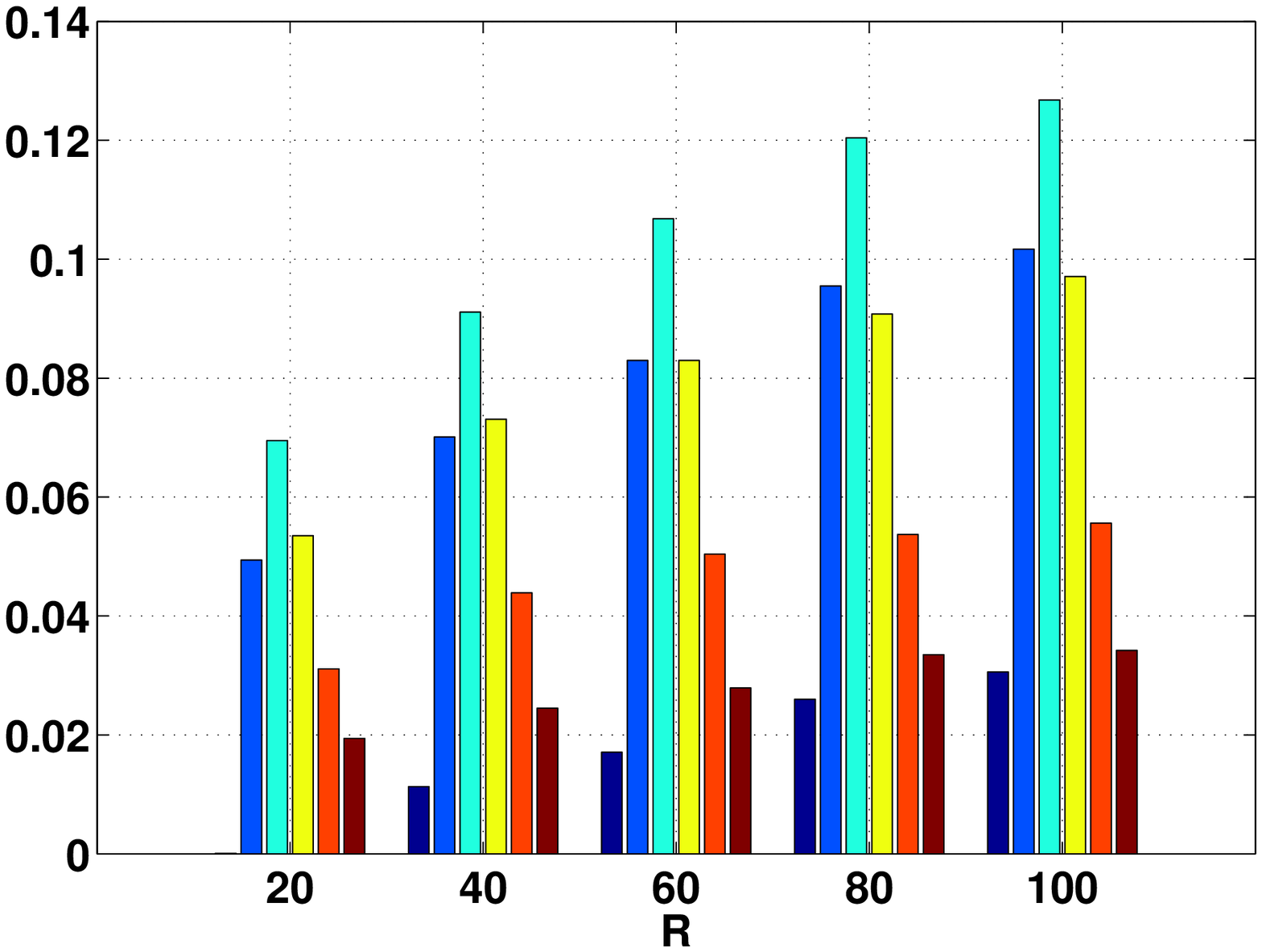} \\
	 & \text{(b) Intersection kernel} & 
\end{tabular}
\vspace{-0.1\baselineskip}
\caption{Recall@$R$ improvement over the vanilla KLSH with $m=1000$ for rank $\in\{16,32,64,128,256,512\}$. (Best viewed in color.)}
\label{fig:rank}
\end{figure*}

\begin{figure*}
	\centering
	\begin{subfigure}[b]{0.32\textwidth}
		\includegraphics[width=\textwidth]{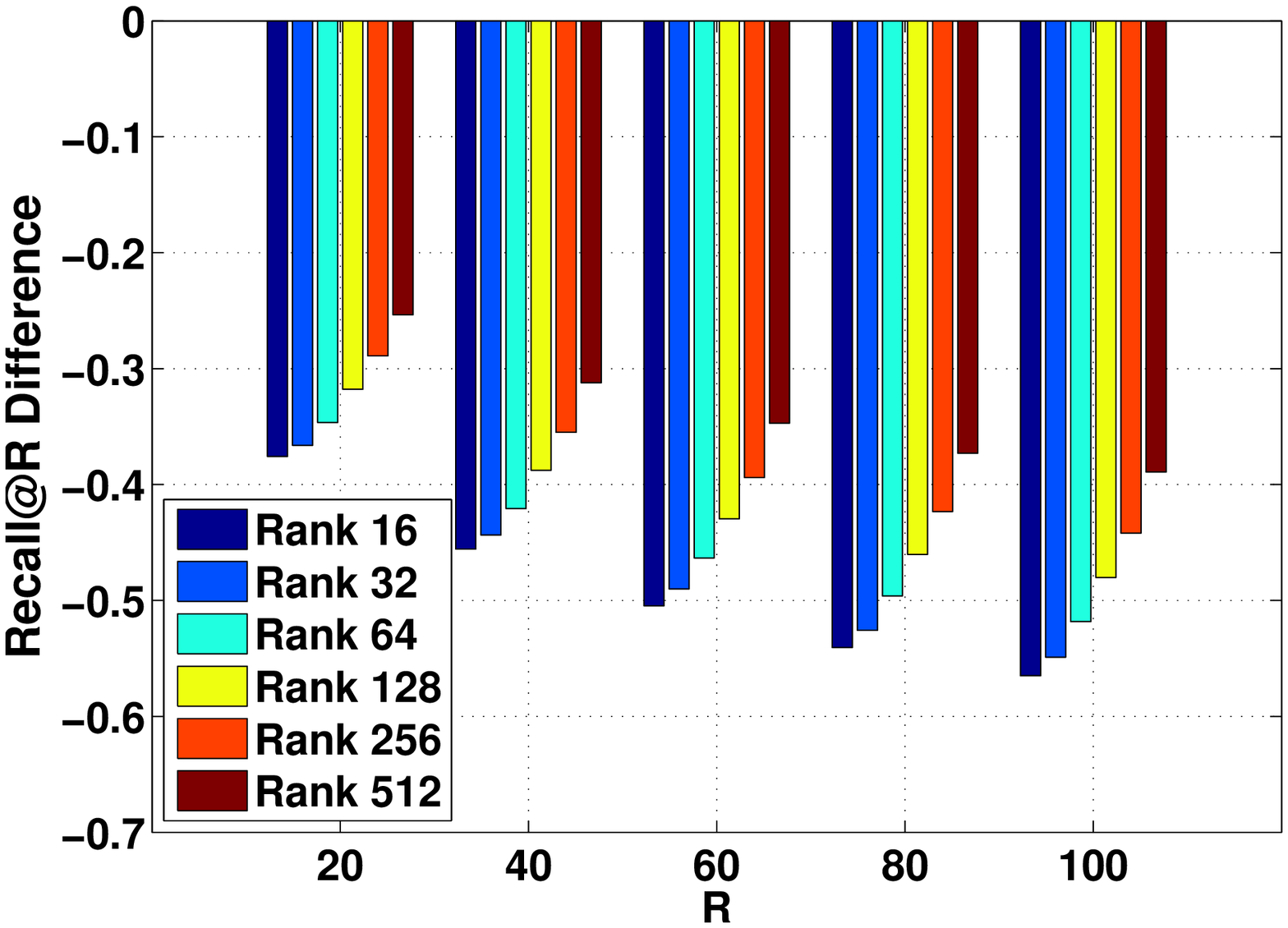}
		\vspace{-0.8\baselineskip}
		\caption{ }
		\label{fig:nystrom}
	\end{subfigure}
	\begin{subfigure}[b]{0.32\textwidth}
		\includegraphics[width=\textwidth]{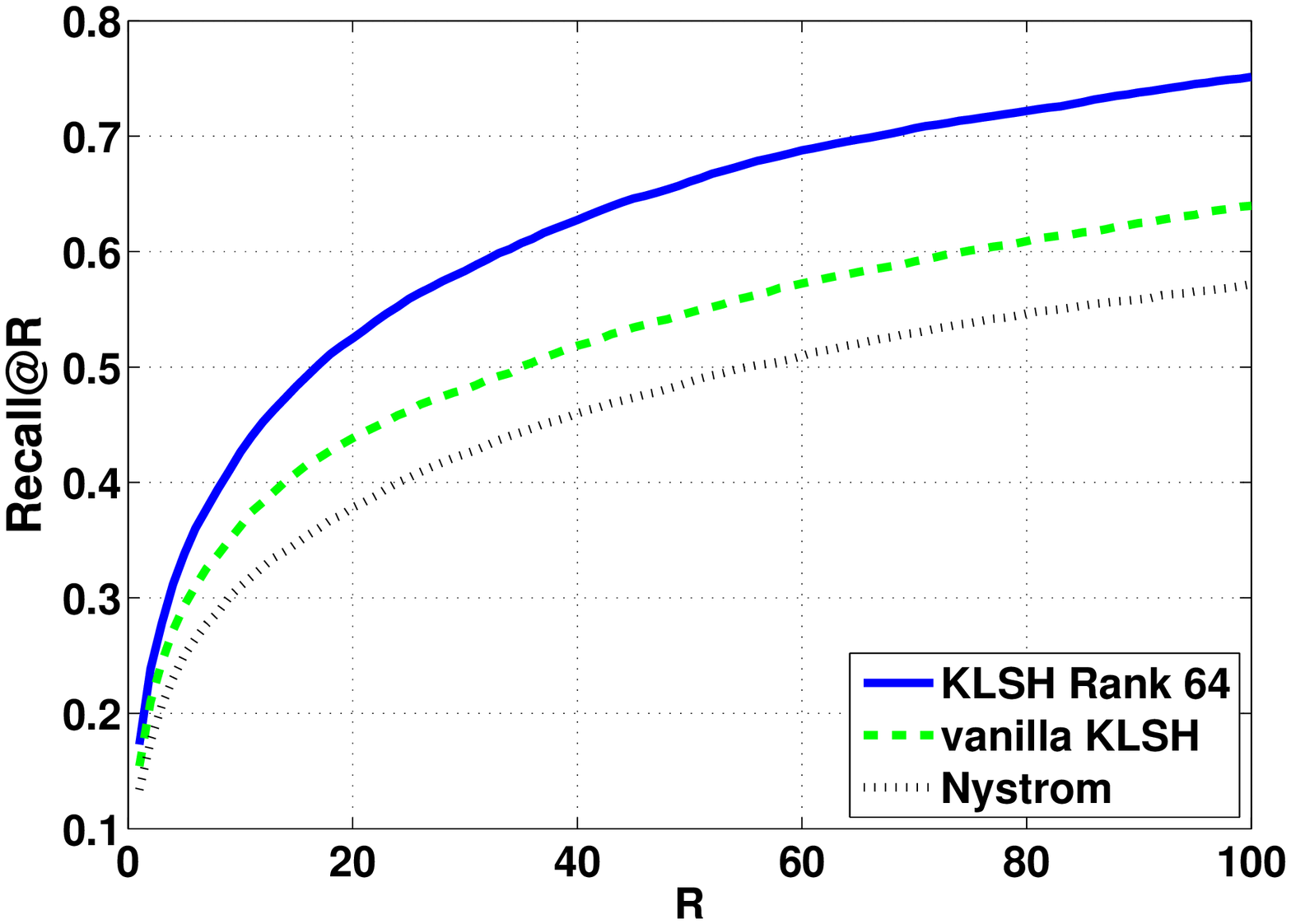}
		\vspace{-0.8\baselineskip}
		\caption{ }
		\label{fig:klsh_nystrom}
	\end{subfigure}
	\begin{subfigure}[b]{0.32\textwidth}
		\includegraphics[width=\textwidth]{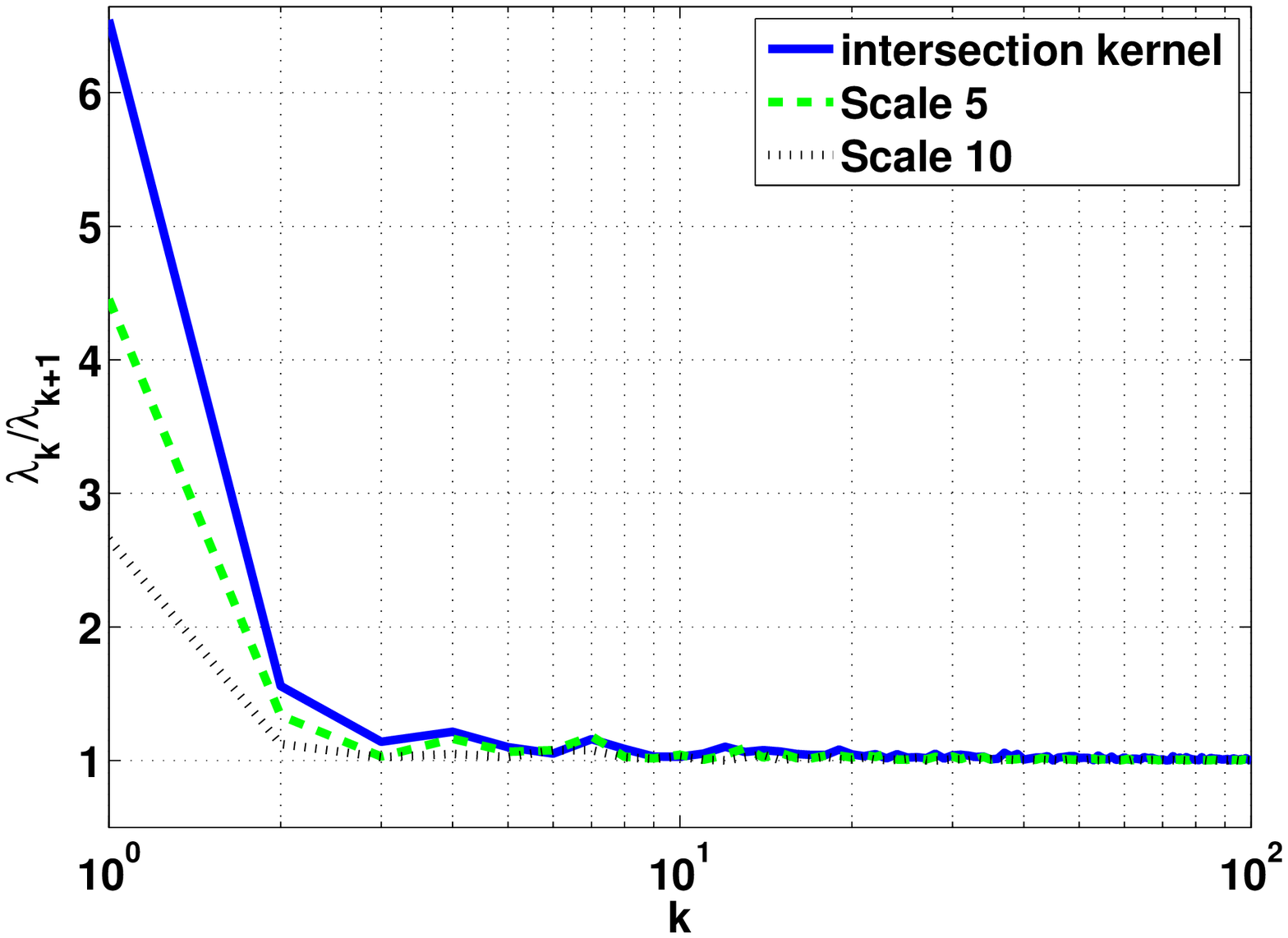}
		\vspace{-0.8\baselineskip}
		\caption{ }
		\label{fig:decay}
	\end{subfigure}
	\vspace{-0.1\baselineskip}
	\caption{SIFT1M intersection kernel. (a) Recall@$R$ improvement over the full-rank Nystr\"om method with $m=1000$ for rank $\in\{16,32,64,128,256,512\}$; (b) Recall@$R$ results for KLSH with rank 64, vanilla KLSH and the full-rank Nystr\"om method for $m=1000$; (c) Change of decaying properties for the exponential transformation with scale parameter $\in\{5,10\}$. (Best viewed in color.)}
\end{figure*}

\begin{figure*}[t]
\centering
\def\arraystretch{0.4}
\setlength{\tabcolsep}{3pt}
\begin{tabular}{ccc}
	\includegraphics[width=0.32\textwidth]{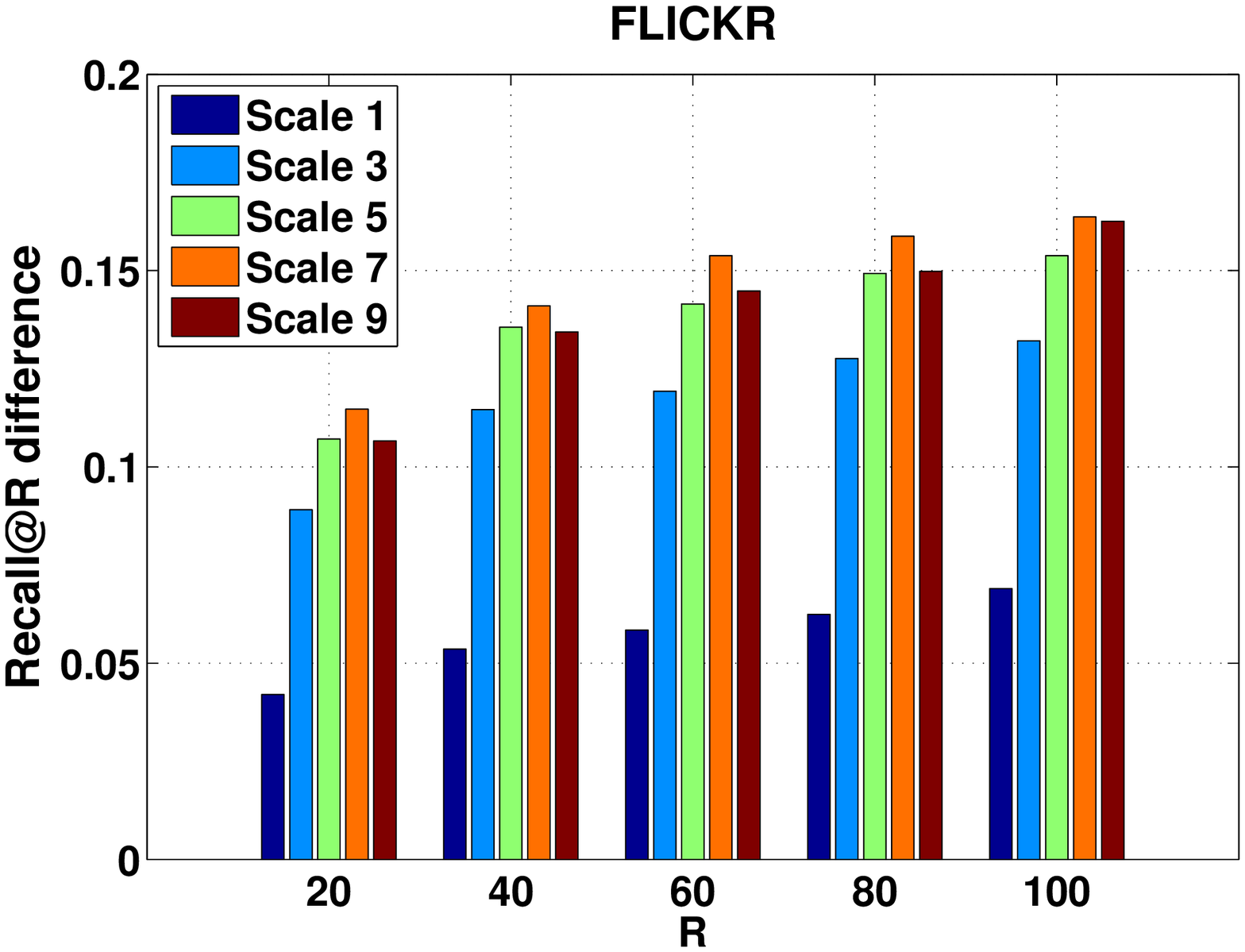} &
	\includegraphics[width=0.32\textwidth]{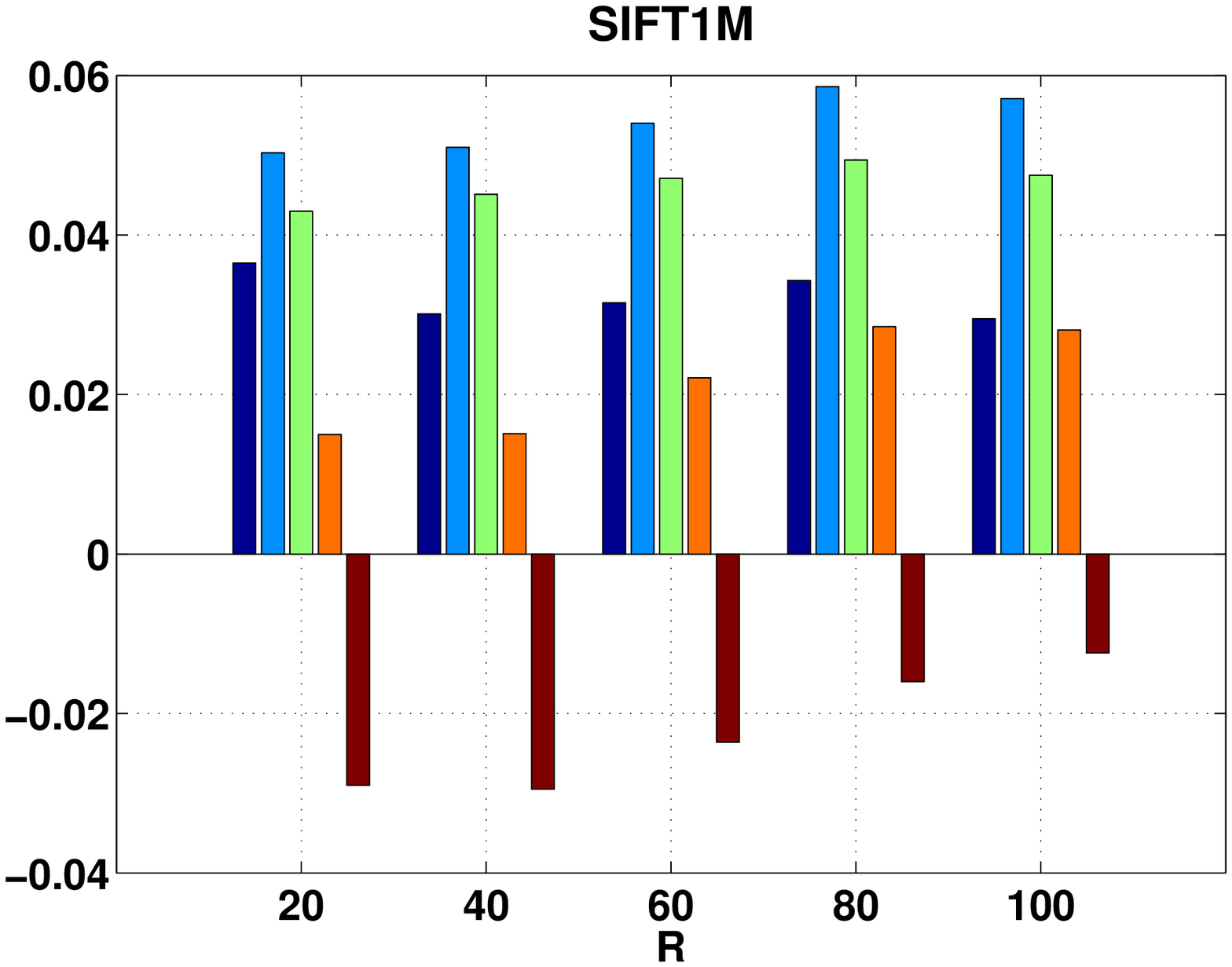} &
	\includegraphics[width=0.32\textwidth]{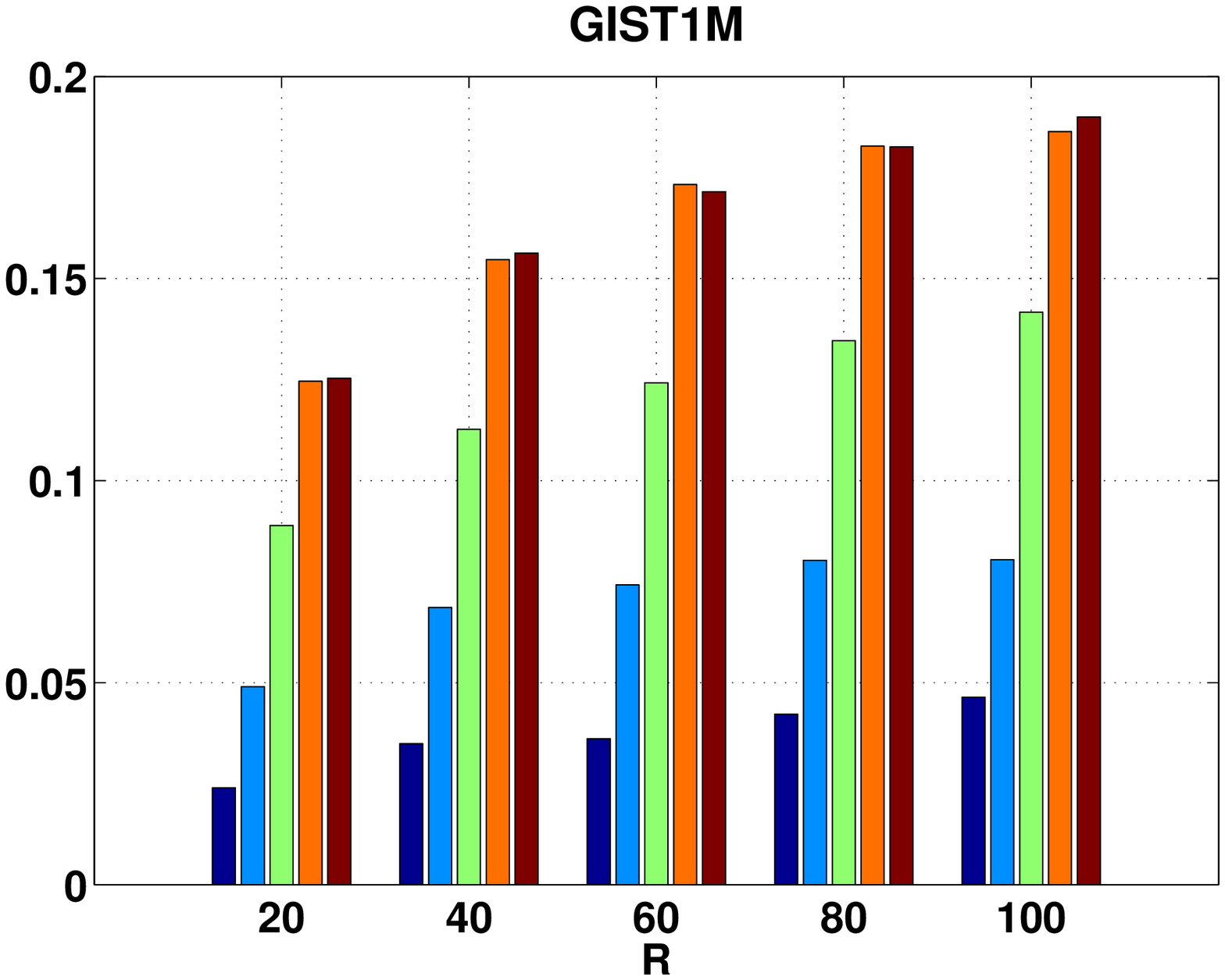} \\
	 & \text{(a) $\chi^2$ kernel} & \\
	\includegraphics[width=0.32\textwidth]{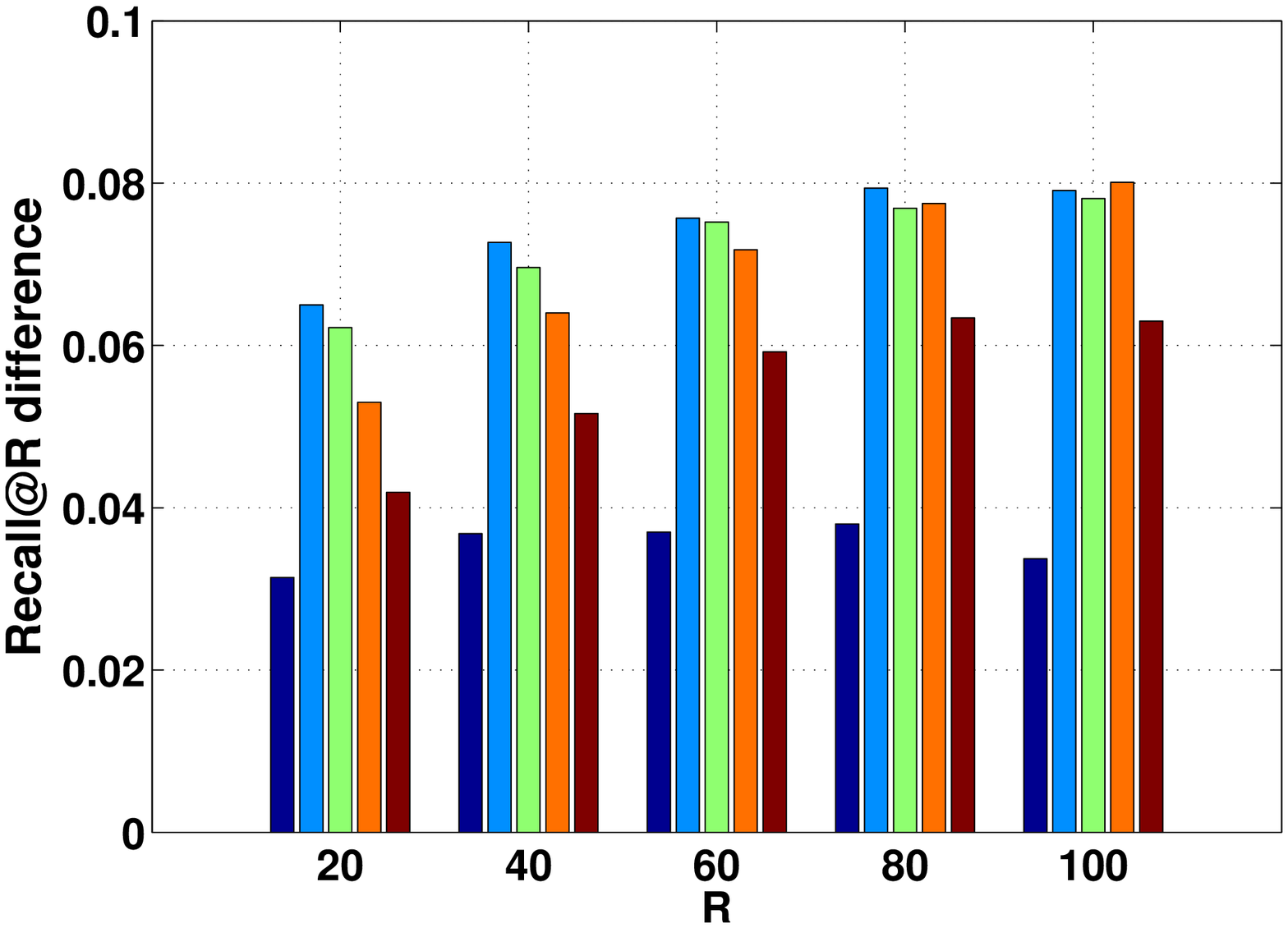} &
	\includegraphics[width=0.32\textwidth]{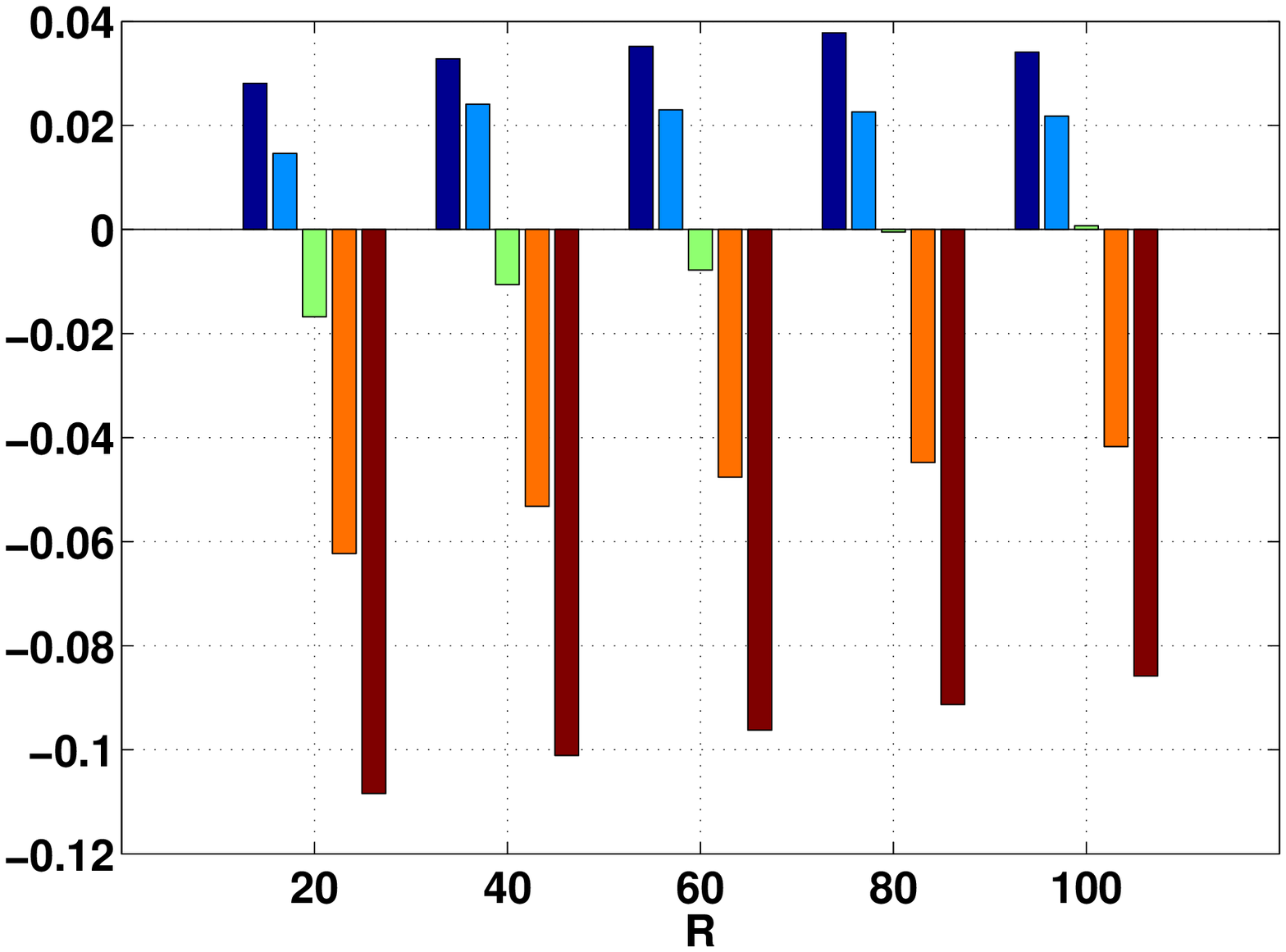} &
	\includegraphics[width=0.32\textwidth]{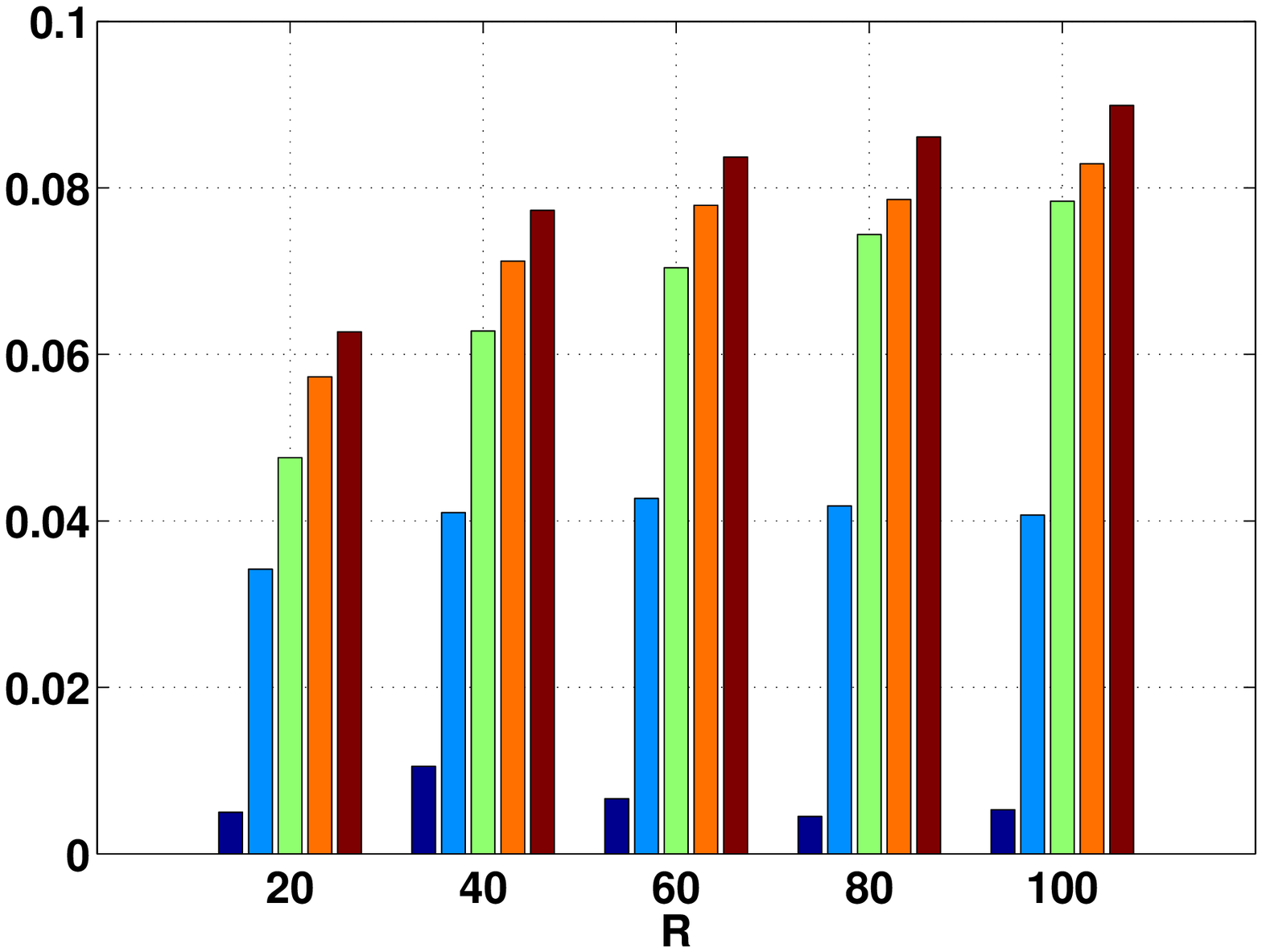} \\
	& \text{(b) Intersection kernel} & 
\end{tabular}
\vspace{-0.1\baselineskip}
\caption{Recall@$R$ result improvement over the low-rank KLSH with $m=1000$ for scale $\in\{1,3,5,7,9\}$. Here, we use rank $32$ for FLICKR, $100$ for SIFT1M, and $64$ for GIST1M. (Best viewed in color.)}
\label{fig:scale}
\end{figure*}

\begin{table*}[t]
	\centering
	\begin{tabular}{|c|c|c|c|c|c|}
	\hline
	Recall@100 & Dataset & KLSH & low-rank & low-rank+transformation & total improvement\\
	\hline
	$\chi^2$ kernel & Flickr & 0.3629 & 0.5125 & 0.6762 & $\bm{+0.3133}$ \\
	 & SIFT1M & 0.6942 & 0.7642 & 0.8213 & $\bm{+0.1271}$ \\
	 & GIST1M & 0.2252 & 0.3360 & 0.5260 & $\bm{+0.3008}$ \\
	\hline
	Intersection kernel & Flickr & 0.4182 & 0.5738 & 0.6529 & $\bm{+0.2347}$ \\
	 & SIFT1M & 0.6397 & 0.7468 & 0.7844 & $\bm{+0.1447}$ \\
	 & GIST1M & 0.2746 & 0.4014 & 0.4913 & $\bm{+0.2167}$ \\ \hline
	\end{tabular}
	\caption{Summary of absolute improvement for Recall@100. }
	\label{table:total}
\end{table*}

\subsection{Effect of rank}
Figure \ref{fig:rank} shows the effect of the rank $r$ (with all other parameters fixed): all but the smallest rank performed better or at least comparable with the vanilla KLSH. This further confirms the empirical results shown in \cite{kpca_lsh,aska} that KPCA+LSH with smaller number of principal components beats KLSH in retrieval performance. However, this is not the entire story. We can clearly see the performance tradeoff as discussed in Section~\ref{sec:tricks}. Initially, the retrieval performance improves with an increasing number of principal components used, which corresponds to decreasing $\lambda_r$. However, at some point, performance drops corresponding to the increase of $\delta_r$. For MIRFLICKR and GIST1M, the difference among ranks can be dramatic, showing the sensitivity of the choice of rank. 
In addition, the best-performing rank is not only dependent on the kernel chosen but also critically dependent on the dataset examined. 
Nonetheless, we can still obtain at least $7\%$ in absolute improvement for Recall@$100$ for the SIFT1M data which is least affected by this trade-off.
Here, the performance for different kernels is quite similar, but is divergent for different datasets. For instance, the optimal rank for the MIRFLICKR data is much smaller than that of the SIFT1M and GIST1M. However, we observe no relationship between rank and the number of bits used, making the recommendations of \cite{kpca_lsh} questionable.

\vspace{1mm}
\noindent\textbf{Comparison with the Nystr\"om method}. Figure~\ref{fig:nystrom} shows the effect of the rank $r$ for the Nystr\"om method: the performance is monotonically increasing with the rank. Moreover, it shows unacceptable retrieval performance even with rank $512$. Contrasting with the obvious tradeoff with the choice of ranks in KLSH, these results corroborate our earlier observation that KLSH is indeed different from the Nystr\"om method. Regarding the performance comparison, we can see from Figure~\ref{fig:klsh_nystrom} that the Nystr\"om method performs worse than both the low-rank version of KLSH and the  standard``full-rank" KLSH by a large margin.

\subsection{Effect of monotone transformation}
Here, we show the effect of the transformation introduced in \eqref{eqn:transformation}. Note that we are free to choose any possible transformation as long as the chosen transformation is increasingly monotonic; our choice of \eqref{eqn:transformation} is simply for illustration. We can see from Figure~\ref{fig:decay} how changing the scale parameter affects the decay of the eigenvalues. In particular, we see that increasing the scaling slows down the decay and will continue to decrease the decay as $s$ gets larger.

Figure~\ref{fig:scale} demonstrates the power of the transformation (with all other parameters fixed): the Recall@$R$ steadily increases as we slow down the decaying speed until the decaying is too gradual. And too large a $s$ may drop the performance significantly. The choice of $s$ and its usefulness also largely depends on both the kernel function and the dataset in consideration: it has more effect on the $\chi^2$ kernel than the intersection kernel. On the other hand, it is more effective in the GIST1M dataset than in SIFT1M. Note here, we are comparing with the original kernel with a fixed rank which favors the original kernel. Thus, there is room for further improvement by choosing a larger rank.

Table~\ref{table:total} summaries the total absolute improvement combining the two techniques together. We can see that the retrieval improvement is at least 12\%, sometimes much higher, among all benchmarks.This again validates the merit of our analysis in Section~\ref{sec:tricks} regarding the interesting trade-offs shown in our performance bound \eqref{eqn:klsh_bound} and demonstrates the power of these simple techniques.

\section{Conclusion}

We introduced a new interpretation of the kernelized locality-sensitive hashing technique. Our perspective makes it possible to circumvent the conceptual issues of the original algorithm and provides firmer theoretical ground by viewing KLSH as applying LSH on appropriately projected data. This new view of KLSH enables us to prove the first formal retrieval performance bounds, which further suggests two simple techniques for boosting the retrieval performance. We have successfully validated these results empirically on large-scale datasets and showed that the choice of the rank and the monotone transformation are vital to achieve better performance. 

\bibliographystyle{unsrt}
\bibliography{klsh_arxiv}

\section*{Appendix}

We first present a proof of Lemma 2.
\begin{proof}
By the Pythagorean theorem, we have
\[
	N(\bx)^2 = \|P_{\hat{V}_k}(\Phi(\bx))\|^2 = \|\Phi(\bx)\|^2 - \|\hpp(\Phi(\bx))\|^2.
\]
The residual $\hpp(\Phi(\bx))$ can be further decomposed into
\begin{equation}
\label{eqn:hpp_eq}
	\hpp(\Phi(\bx)) = \pp(\Phi(\bx)) + \left(\hpp(\Phi(\bx)) - \pp(\Phi(\bx))\right).
\end{equation}
For the first term, we have
\[
	\|\pp(\Phi(\bx))\| \leq \sqrt{\lambda_k}.
\]
Then applying Theorem 4 in~\cite{pca_convergence}, with probability at least $1-e^{-\xi}$, we can also bound the second part of~\eqref{eqn:hpp_eq}:
\[
\left\| \hpp(\Phi(\bx)) - \pp(\Phi(\bx)) \right\| \leq \frac{2M}{\delta_k\sqrt{m}}\left(1+\sqrt{\frac{\xi}{2}}\right),
\]
where $\delta_k = \frac{\lambda_k-\lambda_k+1}{2}$ and $M = \sup_{\bx}\kappa(\bx,\bx) = 1$. Thus,
\[
\|\hpp(\Phi(\bx))\| \leq \sqrt{\lambda_k}+\frac{2}{\delta_k\sqrt{m}}\left(1+\sqrt{\frac{\xi}{2}}\right).
\]
Putting these pieces together, with probability at least $1-e^{-\xi}$, we have
\begin{align}
N(\bx) =& \sqrt{1-\|\hpp(\Phi(\bx))\|^2} \geq 1-\|\hpp(\Phi(\bx))\| \nonumber \\
\ge &1-\sqrt{\lambda_k}-\frac{2}{\delta_k\sqrt{m}}\left(1+\sqrt{\frac{\xi}{2}}\right). \nonumber
\end{align}
\end{proof}

Our proof of Theorem 3, which is given below, requires a few prerequisite results, which we briefly summarize now.
The first is regarding the upper bound of inner product of complement of projections onto the subspace from kernel principal component analysis.
\begin{lemma}
\label{lemma:res_inner}
Consider a feature map $\Phi \in \mathcal{H}$ defined by a normalized kernel function $\kappa(\cdot, \cdot)$ in $\mathcal{X}$ with a probability measure $p$. Let $S_{m}=\{\bx_1,\dots, \bx_m\}$ be $m$ i.i.d. samples drawn from $p$ and $C$ be the covariance operator of $p$ with decreasing eigenvalues $\lambda_1\geq\lambda_2\geq \dots$. Let $V_k$ and $\hat{V}_k$ be the eigen-spaces corresponding the covariance operator $C$ and its empirical counterpart $C_{S_m}$. Then, with probability at least $1-e^{-\xi}$ over the selection of $S_m$, we have
\[
	\langle \hpp(\Phi(\bx)), \hpp(\Phi(\by))\rangle \le \left( \sqrt{\lambda_k}+\frac{2}{\delta_k\sqrt{m}}\left(1+\sqrt{\frac{\xi}{2}}\right) \right)^2. 
\]
\end{lemma}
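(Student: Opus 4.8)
The plan is to reduce this bilinear estimate to the one-sided residual-norm bound already established in the proof of Lemma~\ref{lemma:nx_lower}, and then close the argument with Cauchy--Schwarz. First I would apply the Cauchy--Schwarz inequality in $\mathcal{H}$ to write
\[
	\langle \hpp(\Phi(\bx)), \hpp(\Phi(\by))\rangle \le \|\hpp(\Phi(\bx))\|\,\|\hpp(\Phi(\by))\|,
\]
which immediately reduces the claim to controlling each residual norm $\|\hpp(\Phi(\cdot))\|$ separately.

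The second step reuses, verbatim, the bound derived in the proof of Lemma~\ref{lemma:nx_lower}: the decomposition $\hpp(\Phi(\bx)) = \pp(\Phi(\bx)) + \bigl(\hpp - \pp\bigr)(\Phi(\bx))$ splits the residual into a deterministic piece satisfying $\|\pp(\Phi(\bx))\| \le \sqrt{\lambda_k}$ and a random piece controlled by Theorem~4 of~\cite{pca_convergence}. Together these give $\|\hpp(\Phi(\bx))\| \le \sqrt{\lambda_k}+\frac{2}{\delta_k\sqrt{m}}(1+\sqrt{\xi/2})$, and the identical bound for $\by$; multiplying the two factors produces exactly the squared right-hand side of the statement.

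The only point that needs genuine care is the probabilistic bookkeeping: I must check that a \emph{single} good event of probability at least $1-e^{-\xi}$ controls the residuals at both $\bx$ and $\by$ simultaneously, rather than requiring a union bound that would degrade the confidence to $1-2e^{-\xi}$. This works because Theorem~4 of~\cite{pca_convergence} bounds the operator-norm distance between the projections $\hpp$ and $\pp$ — a single random quantity depending only on the sample $S_m$ and not on any particular evaluation point — so that the pointwise estimate $\|(\hpp-\pp)(\Phi(\bx))\| \le \|\hpp-\pp\|_{\mathrm{op}}\,\|\Phi(\bx)\|$ holds uniformly over all $\bx$, using $\|\Phi(\bx)\| \le M = 1$ for the normalized kernel. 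Consequently, on one and the same event both factors obey the desired bound, no union bound is needed, and the stated inequality follows directly from the Cauchy--Schwarz step.
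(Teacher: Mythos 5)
Your proposal follows essentially the same route as the paper's proof: Cauchy--Schwarz, the triangle-inequality split of $\hpp(\Phi(\bx))$ into $\pp(\Phi(\bx))$ plus the projection difference, the bound $\|\pp(\Phi(\bx))\|\le\sqrt{\lambda_k}$, and Theorem~4 of~\cite{pca_convergence} for the operator norm $\|\hpp-\pp\|$. Your explicit observation that a single high-probability event (controlling the operator norm, which depends only on $S_m$) suffices for both evaluation points is a correct and welcome clarification of a step the paper leaves implicit, but it does not change the argument.
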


\begin{proof}
From Cauchy-Schwarz inequality, we have
\[
	\langle \hpp(\Phi(\bx)), \hpp(\Phi(\by))\rangle \leq \|\hpp(\Phi(\bx)) \| \|\hpp(\Phi(\by))\|.
\]
Then, for any $x\in \mathcal{X}$,
\begin{align}
	&\| \hpp(\Phi(\bx))\| \nonumber \\
	\le& \| \pp(\Phi(\bx))\| + \|\hpp(\Phi(\bx)) - \pp(\Phi(\bx))\| \nonumber \\
	\le& \| \pp(\Phi(\bx))\| + \|\hpp- \pp\| \|\Phi(\bx)\|. \nonumber
\end{align}
By the definition of operator norms, we have $\| \pp(\Phi(\bx))\| \leq \sqrt{\lambda_k} \|\Phi(\bx)\|$. Moreover, as stated in Theorem 4 in~\cite{pca_convergence}, with probability at least $1-e^{-\xi}$, we have that 
\[
	\|\hpp-\pp\| \leq \frac{2M}{\delta_k\sqrt{m}}\left(1+\sqrt{\frac{\xi}{2}}\right)
\]
where $\delta_k = \frac{\lambda_k-\lambda_{k+1}}{2}$ and $M = \sup_x\kappa(\bx, \bx) = 1$.

Hence, with probability at least $1-e^{-\xi}$ over $S_m$, we have
\[
	\langle \hpp(\Phi(\bx)), \hpp(\Phi(\by))\rangle \le \left( \sqrt{\lambda_k}+\frac{2}{\delta_k\sqrt{m}}\left(1+\sqrt{\frac{\xi}{2}}\right) \right)^2  \|\Phi(\bx)\| \|\Phi(\by)\|.
\]

Since $\|\Phi(\bx)\| = 1$ for any $\bx$, we have proved the lemma.
\end{proof}

For completeness, we also state the performance bound of standard LSH:
\begin{theorem}
\cite{lsh, plsh, simhash}. Let $(X,d_{X})$ be a metric space on a subset of $\mathbb{R}^d$. Suppose that $(X,d_{X})$ admits a similarity hashing family. Then for any $\epsilon>0$, there exists a randomized algorithm for $(1+\epsilon)$-near neighbor on $n$-point database with success probability larger than 0.5, which uses $O(dn+n^{1+\frac{1}{1+\epsilon}})$ space, with query time dominated by $O(n^{\frac{1}{1+\epsilon}})$ distance computations.
\label{thm:lsh}
\end{theorem}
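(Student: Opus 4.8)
The plan is to reproduce the classical locality-sensitive hashing construction of \cite{lsh, plsh, simhash}, specialized to the similarity hashing family supplied by the hypothesis. Write $c = 1+\epsilon$ and fix a reference radius $r_1$, so that the $(1+\epsilon)$-near-neighbor task becomes: given a query $\bq$, if some database point lies within distance $r_1$ of $\bq$, return (with probability exceeding $1/2$) a point within distance $r_2 = c\,r_1$. The first move is to convert the assumed similarity hashing family (for which $\Pr[h(\bx)=h(\by)]$ equals the normalized similarity, hence is monotone in $d_X$) into an $(r_1,r_2,p_1,p_2)$-\emph{sensitive} family: for $d_X(\bx,\by)\le r_1$ we have collision probability at least $p_1$, and for $d_X(\bx,\by)\ge r_2$ at most $p_2$, with $p_1>p_2$. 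Define the exponent $\rho = \frac{\ln(1/p_1)}{\ln(1/p_2)}$; the entire proof then turns on bounding $\rho$.

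Next I would carry out the two standard amplification steps. The AND step concatenates $k$ independent draws into $g=(h_1,\dots,h_k)$, sharpening the collision probabilities to $p_1^k$ and $p_2^k$; choosing $k = \lceil \log_{1/p_2} n\rceil$ forces $p_2^k \le 1/n$, so any single far point collides with $\bq$ in a fixed table with probability at most $1/n$, while a near point collides with probability at least $p_1^k = n^{-\rho}$. The OR step builds $L = \Theta(n^\rho)$ independent tables with independently sampled $g_1,\dots,g_L$. The query algorithm hashes $\bq$ under each $g_j$, scans the points in the $L$ matching buckets, halts once it has inspected $3L$ candidates, and returns the closest point seen. This description fixes the parameters against which the space and time will be charged.

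The correctness analysis rests on two probabilistic estimates combined by a union bound. For a genuine near neighbor $\by^*$ with $d_X(\bq,\by^*)\le r_1$, the probability it fails to collide with $\bq$ in \emph{all} $L$ tables is at most $(1-n^{-\rho})^{L}$, which a constant-factor choice of $L$ (say $L = 2n^\rho$) drives below $1/6$. For the far points, each of the at most $n$ points at distance $\ge r_2$ collides in a given table with probability at most $1/n$, so the expected number of far collisions across all tables is at most $L$; by Markov's inequality it exceeds $3L$ with probability at most $1/3$. On the intersection of the two good events the near neighbor's bucket is scanned before the candidate budget is exhausted, so the algorithm returns a point within $r_2$, and $1-\tfrac16-\tfrac13 > \tfrac12$ gives success probability above $0.5$ (a constant number of repetitions absorbs any slack). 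For the resource counts, I store the $n$ points once in $O(dn)$ space and keep $O(n)$ pointers per table, giving $O(dn + Ln) = O(dn + n^{1+\rho})$; the query performs $O(kL)$ hash evaluations and, crucially, at most $O(L)=O(n^\rho)$ distance computations on retrieved candidates.

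The one genuinely quantitative step — everything else being bookkeeping — is establishing $\rho \le \frac{1}{1+\epsilon}$, which is exactly what turns the generic $n^\rho$ bounds into the claimed $n^{1/(1+\epsilon)}$ space and query time. For the binary similarity family underlying \eqref{eqn:lsh} (equivalently, bit-sampling on the induced Hamming embedding, or the angular \textsf{simhash} family), the collision probabilities take the form $p_1 = 1-u$ and $p_2 = 1-cu$ for $u$ proportional to $r_1$, so $\rho \le 1/c$ is equivalent to $(1-u)^{c}\ge 1-cu$, which is precisely Bernoulli's inequality for $c\ge 1$ and $cu\le 1$. I expect this exponent bound to be the main obstacle in the sense that it is where the approximation factor enters; once $\rho\le 1/(1+\epsilon)$ is in hand, substituting into the space and time counts of the previous paragraph completes the proof.
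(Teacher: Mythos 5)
Your proposal is correct: it reconstructs the classical LSH argument from the cited literature --- converting the similarity hashing family into an $(r_1,r_2,p_1,p_2)$-sensitive family, AND/OR amplification with $k=\lceil\log_{1/p_2}n\rceil$ and $L=\Theta(n^{\rho})$ tables, Markov's inequality plus a union bound for the success probability, and the key exponent bound $\rho\le 1/(1+\epsilon)$ via Bernoulli's inequality $(1-u)^{c}\ge 1-cu$, which holds for any similarity hashing family since collision probability there equals $1-d_X$. Note that the paper itself gives no proof of this statement: Theorem~\ref{thm:lsh} is quoted verbatim as a known result from \cite{lsh, plsh, simhash} and used as a black box inside the proof of Theorem~\ref{thm:klsh}, so your argument is essentially the proof found in those references rather than a departure from anything in the paper.
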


Using the above results, we are ready to prove our main result in Theorem 3.
\begin{proof}
By the definition of $P_{\hat{V}_k}$, we can decompose $\kappa(\bq, \yqk)$ into two parts,
\begin{equation}
\label{eqn:eq1}
\kappa(\bq, \yqk) = \hk(\bq, \yqk) + \langle \hpp(\Phi(\bq)), \hpp(\Phi(\yqk)) \rangle.
\end{equation}
Thus, by Lemma~\ref{lemma:res_inner}, we have
\begin{align}
 \kappa(\bq, \yqk) \ge & \hk(\bq, \yqk) - \left|\langle \hpp(\Phi(\bq)), \hpp(\Phi(\yqk)) \rangle\right| \nonumber \\
\ge & \hk(\bq, \yqk) - \left( \sqrt{\lambda_k}+\frac{2}{\delta_k\sqrt{m}}\left(1+\sqrt{\frac{\xi}{2}}\right) \right)^2. \nonumber
\end{align}
To lower-bound $\hk(\bq, \yqk)$, we need to use the result for LSH, which asks for normalized kernels. Thus, we consider the normalized version,
\[
	\hk_n(\bq, \yqk) = \frac{\hk(\bq, \yqk)}{N(\bq)N(\yqk)}.
\]
Then we can relate a distance function via $\hat{d}(\bq, \yqk) = 1-\hk_n(\bq, \yqk)$ \cite{simhash}. By the LSH guarantee in Theorem~\ref{thm:lsh}, with probability larger than $0.5$, we have
\[
	\hat{d}(\bq, \yqk) \leq (1+\epsilon)\hat{d}(\bq, \oyqk),
\]
which is equivalent to 
\[
	\hk_n(\bq, \yqk) \geq (1+\epsilon)\hk_n(\bq, \oyqk) - \epsilon,
\]
where $\oyqk = \text{argmax}_{\bm{x}\in S}\hk_n(\bq,\bm{x})$. Applying Lemma 2, with probability $1-e^{-\xi}$, the true optimal $\oyq$ with respect to $\kappa$ is not eliminated for LSH, thus we have $\hk_n(\bq, \oyqk) \geq \hk_n(\bq, \oyq)$ due to the optimality of $\oyqk$ with respect to $\hk_n$, and with probability $0.5\times (1-e^{-\xi})$
\[
	\hk_n(\bq, \yqk) \geq (1+\epsilon)\hk_n(\bq, \oyq) - \epsilon.
\]
Expanding $\hk_n$, we get
\[
	\frac{\hk(\bq, \yqk)}{N(\bq)N(\yqk)} \geq (1+\epsilon)\frac{\hk(\bq, \oyq)}{N(\bq)N(\oyq)} - \epsilon.
\]
which can be reduced to
\[
	\hk(\bq, \yqk) \geq (1+\epsilon)(1-\sqrt{\lambda_k}-\eta)\hk(\bq, \oyq) - \epsilon,
\]
since $1-\sqrt{\lambda_k}-\eta \leq N(\bx) \leq 1$.
Decompose $\hk(\bq, \oyq)$ on right hand-side above as
\[
	\hk(\bq, \oyq) = \kappa(\bq, \oyq) - \langle \hpp(\Phi(\bq)), \hpp(\Phi(\oyq))\rangle.
\]
Thus,
\begin{equation}
\label{eqn:eq2}
\hk(\bq, \yqk) \ge (1+\epsilon)(1-\sqrt{\lambda_k}-\eta)\kappa(\bq, \oyq) -(1+\epsilon)\left|\langle \hpp(\Phi(\bq)), \hpp(\Phi(\oyq))\rangle\right| - \epsilon. 
\end{equation}
Combining results in Equation~\eqref{eqn:eq1} and~\eqref{eqn:eq2},
\begin{align}
\kappa(\bq, \yqk) \ge& \hk(\bq, \yqk) - \left|\langle \hpp(\Phi(\bq)), \hpp(\Phi(\yqk)) \rangle\right| \nonumber \\
\ge & (1+\epsilon)(1-\sqrt{\lambda_k}-\eta)\kappa(\bq, \oyq) -\epsilon \nonumber \\
&-(1+\epsilon)\left|\langle \hpp(\Phi(\bq)), \hpp(\Phi(\oyq))\rangle\right| - \left|\langle \hpp(\Phi(\bq)), \hpp(\Phi(\yqk)) \rangle\right| \nonumber
\end{align}
Now we can apply Lemma~\ref{lemma:res_inner}:
\[
\kappa(\bq, \yqk) \ge  (1+\epsilon)(1-\sqrt{\lambda_k}-\eta)\kappa(\bq, \oyq) - \epsilon -(2+\epsilon) \left( \sqrt{\lambda_k}+\eta \right)^2.
\]
\end{proof}
\end{document}